\documentclass{article}

\usepackage{microtype}
\usepackage{graphicx}
\usepackage{subcaption}
\usepackage{booktabs} 

\usepackage{hyperref}

\usepackage[accepted]{icml2026}

\usepackage{amsmath}
\usepackage{amssymb}
\usepackage{mathtools}
\usepackage{amsthm}

\usepackage{graphbox} 
\usepackage{multirow}
\usepackage[table]{xcolor}
\usepackage{algorithm}
\usepackage{longtable}
\definecolor{RankFirst}{RGB}{255,242,204}
\definecolor{RankSecond}{RGB}{255,204,153}
\definecolor{RankThird}{RGB}{221,235,247}
\definecolor{coralPink}{HTML}{ED028C}
\newcommand{\rankfirst}[1]{\cellcolor{RankFirst}{#1}}
\newcommand{\ranksecond}[1]{\cellcolor{RankSecond}{#1}}
\newcommand{\rankthird}[1]{\cellcolor{RankThird}{#1}}

\usepackage[capitalize,noabbrev]{cleveref}

\theoremstyle{plain}
\newtheorem{theorem}{Theorem}[section]

\theoremstyle{definition}
\newtheorem{definition}[theorem]{Definition}

\theoremstyle{remark}

\usepackage[disable,textsize=tiny]{todonotes}

\icmltitlerunning{Semantic Granularity Navigation in Image Editing}

\begin{document}

\twocolumn[
  \icmltitle{Semantic Granularity Navigation in Image Editing}



  \icmlsetsymbol{equal}{*}

  \begin{icmlauthorlist}
    \icmlauthor{\href{https://orcid.org/0009-0006-2839-3901}{\textcolor{black}{Liangsi Lu}}}{gdut}
    \icmlauthor{\href{https://orcid.org/0009-0001-2511-4763}{\textcolor{black}{Minzhe Guo}}}{gdut}
    \icmlauthor{\href{https://orcid.org/0000-0001-6000-3914}{\textcolor{black}{Xuhang Chen}}}{hzu}
    \icmlauthor{\href{https://orcid.org/0009-0009-3928-7495}{\textcolor{black}{Yang Shi}}}{gdut}

  \end{icmlauthorlist}

  \icmlaffiliation{gdut}{Guangdong University of Technology, Guangzhou, China}
  \icmlaffiliation{hzu}{Huizhou University, Huizhou, China}

  \icmlcorrespondingauthor{Yang Shi}{sudo.shiyang@gmail.com}

  \icmlkeywords{Machine Learning, ICML}

  \vskip 0.3in
]



\makeatletter
\g@addto@macro\icmlcorrespondingauthor@text{; Project page: \url{https://naviedit.github.io}}
\makeatother
\printAffiliationsAndNotice{}  

\begin{abstract}
Despite the generative capabilities of diffusion and flow models, real-image editing remains constrained by a persistent trade-off between semantic editability and structural fidelity. We trace a primary cause of this limitation to the implicit coupling of edit progress with model scale in existing paradigms. Under this coupling, stronger edits typically require visiting noisier states, which spends computation on destabilizing layout before the semantic change is well localized. We introduce \textbf{NaviEdit}, a training-free inference-time controller that decouples edit progress from model scale traversal through a strict self-consistency contract. NaviEdit operates at the rollout level and leaves the underlying pretrained model unchanged. It treats scale as a control input and reallocates a fixed step budget toward semantically responsive intermediate scales instead of destructive high-noise regimes. Experiments show positive average gains across compatible editors and flow backbones, supporting decoupling as a portable inference-time control principle.
\end{abstract}

\begin{figure}[!t]
    \centering
    \includegraphics[width=0.9\linewidth]{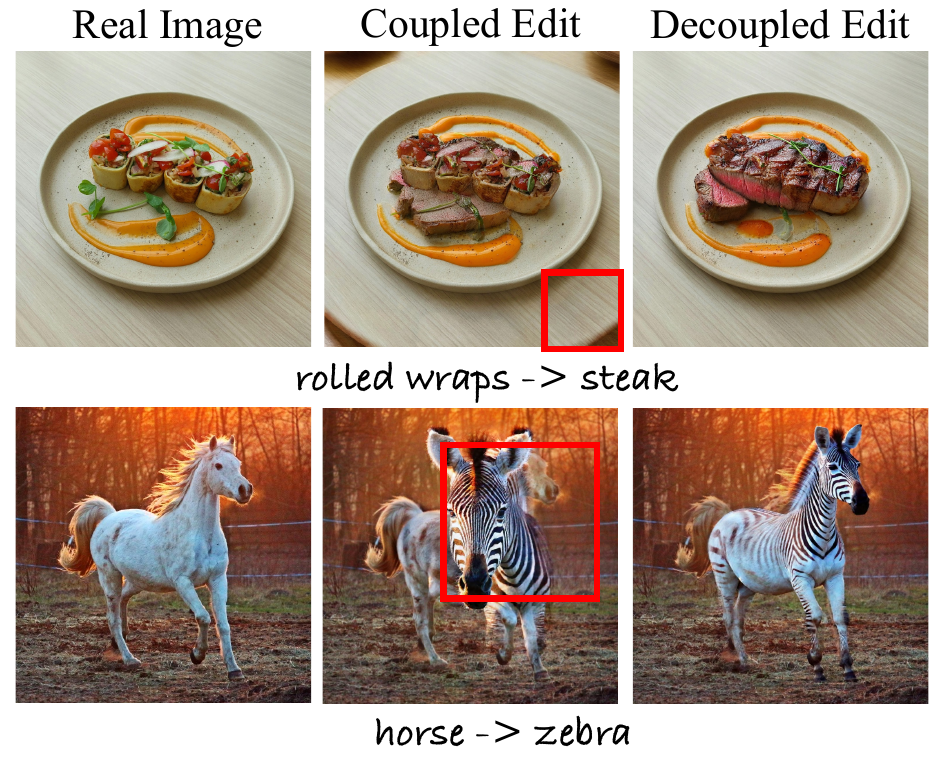}
\caption{\textbf{Coupling vs.\ decoupling.}
Under the same model and step budget, coupled scale--progress editing exhibits drift/leakage into non-edited regions (red boxes), while decoupled navigation preserves structure while executing the semantic change.}
\label{fig:intro_teaser}
\end{figure}

\section{Introduction}
Text-to-image (T2I) generators based on diffusion or flow have recently reached a level of realism that makes them attractive as generic visual priors~\cite{lipman2023flow,liu2022flow,peebles2023scalable,esser2024scaling}. Many image editing methods therefore leverage a pretrained T2I model directly at inference time. SDEdit~\cite{meng2021sdedit} edits by corrupting the source image with noise and denoising it under the generative prior, while Prompt-to-Prompt~\cite{hertz2023prompt} manipulates cross-attention during denoising to preserve layout while modifying content.

However, a persistent trade-off between editability and fidelity remains: stronger semantic changes often damage non-edited content, whereas stronger preservation often weakens the edit itself. In FlowEdit~\cite{kulikov2025flowedit}, for example, moving the trajectory toward higher-noise states affords more semantic freedom but weakens structural anchoring, amplifying artifacts outside the target region (Figure~\ref{fig:intro_teaser}). Motivated by this tension, recent methods reshape the sampling process through schedule, resampling, or inversion, including TiNO-Edit~\cite{chen2024tino}, Schedule Your Edit~\cite{lin2024schedule} and Dual-Schedule Inversion~\cite{huang2025dual}.

Many iterative training-free editors make the same implicit design choice: they use the model's \emph{scale coordinate} as an edit clock. The same diffusion timestep, noise level, or flow-time coordinate that indexes the model state is also used as a proxy for how far the edit has progressed. In practice, stronger edits are typically pursued by starting from noisier states or by expanding the visited scale range toward higher noise. Edit progress and scale traversal are therefore coupled by construction.

Our empirical probes suggest that this default assumption is problematic. High scales produce coarse, spatially underdetermined editing, while low scales are dominated by local reconstruction and struggle to alter geometry. Between them lies an \emph{effective scale window} where the differential field is both semantically responsive and structurally anchored. The scale axis therefore indexes distinct editable-information regimes rather than a uniform progress variable.

We instantiate this view with \textbf{NaviEdit}, a training-free controller that treats image editing as controlled navigation of the latent in a differential field under a fixed step budget. A fixed budget edit traverses several such regimes over its trajectory, which we call a \emph{rollout}. We refer to the rollout-level, budgeted property induced by this traversal as \emph{semantic granularity}. NaviEdit navigates these scale-dependent editable-information regimes, and the resulting scale allocation determines the semantic granularity. To do so, NaviEdit introduces an explicit \emph{edit progress axis} and uses the model's scale coordinate as a control input. Conventional pipelines tie progress directly to scale traversal; NaviEdit decouples the two. At each step, NaviEdit selects a scale, queries the co-located differential field at that scale, and advances the latent with a self-consistent update.

Under a fixed step budget, decoupling turns editing into a compute-allocation problem along the scale axis. Rather than expanding the visited range into destructive high-noise regimes, NaviEdit concentrates computation by increasing \emph{density} within the effective scale window; optionally, this density can be adapted online using signals already produced during editing, without additional model evaluations.

Our experiments test this view in three ways: (i) empirical scale diagnostics and effective-window discovery; (ii) controlled schedule comparisons that isolate coupling versus decoupling under a fixed step budget; and (iii) axis-consistency ablations that expose systematic bias under mismatch. We additionally study results across other compatible editors. Finally, we report ablations on compute allocation, optional internal gating, and axis consistency to clarify the conditions required for stable navigation.

\section{Related Work}
\label{sec:related}

Most training-free text-guided editing pipelines built on diffusion-style generators steer sampling along the model’s noise/scale coordinate, using noise injection, masked resampling, or attention/feature interventions to trade semantic change against faithfulness; representative examples include SDEdit, RePaint, DiffEdit, Prompt-to-Prompt, Plug-and-Play, and inversion-free editing variants that unify attention control mechanisms~\cite{meng2021sdedit,lugmayr2022repaint,couairon2023diffedit,hertz2023prompt,tumanyan2023pnp,xu2023inversion}. Closely related are works that improve reconstruction or edit stability by modifying inversion or noise schedules rather than changing the model, such as Schedule Your Edit and Dual-Schedule Inversion~\cite{lin2024schedule,huang2025dual}.

On the generative modeling, flow matching and rectified flow view generation as learning vector fields/ODE transport between distributions, and modern text-to-image models increasingly adopt Transformer architectures (e.g., DiT) and rectified-flow Transformers ~\cite{lipman2023flow,liu2022flow,peebles2023scalable,esser2024scaling}. Most recently, FlowEdit demonstrates inversion-free editing directly on pre-trained text-to-image flow models~\cite{kulikov2025flowedit}; orthogonally, a line of work shows that frozen diffusion/flow models expose spatially meaningful internal signals that can be mined for localization (e.g., segmentation/correspondence), informing how one might obtain masks without extra training~\cite{luo2023hyperfeatures,tian2024diffseg}.

We provide an extended discussion of related work in Appendix~\ref{app:related_work}.

\begin{figure*}
    \centering
    \includegraphics[width=\linewidth]{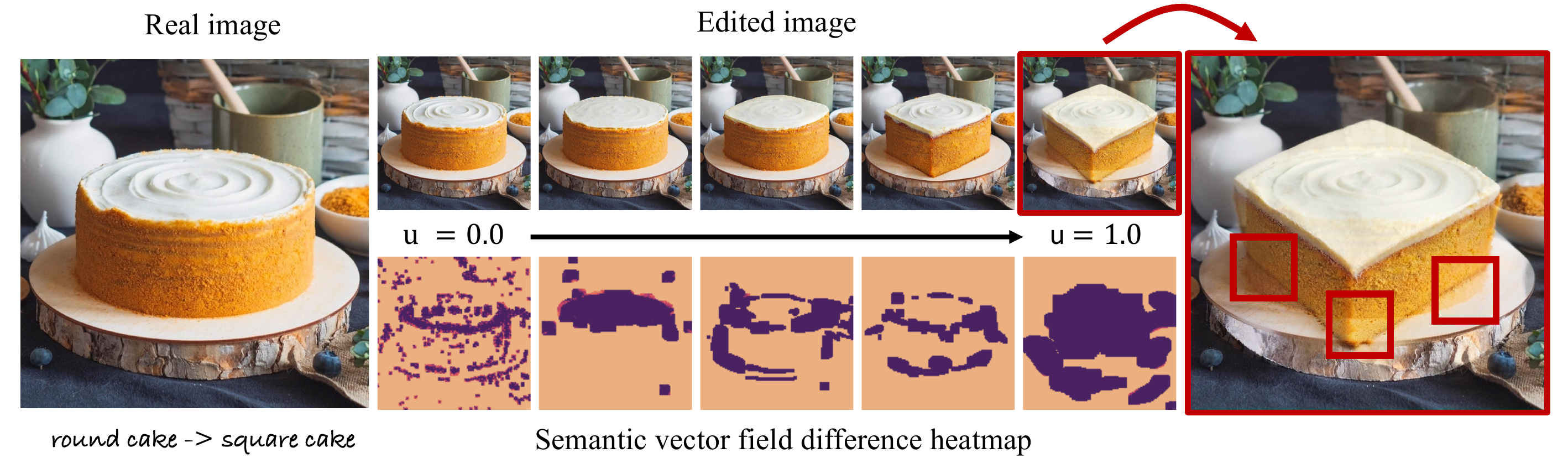}
\caption{\textbf{Editable-Information Spectrum and Editability Regimes.} 
We visualize the differential semantic vector field (bottom) and editing results (top) across scales for the prompt ``round cake $\to$ square cake''.
The scale axis reveals a \textbf{local editable-information spectrum}:
at low scale (small noise scale $u$), the field is \textbf{texture-entangled} and fails to alter geometry;
at high scale (large noise scale $u$), the field becomes coarse and causes \textbf{global drift} with background hallucinations (red boxes).
Effective editing requires navigating the \textbf{structural sweet spot} (middle), where the field is semantically potent yet spatially anchored.}
    \label{fig:scale}
\end{figure*}

\section{Empirical Motivation}
\label{sec:motivation}

Current training-free editing pipelines, routinely use the model's scale coordinate as an implicit progress proxy. Stronger semantic edits are pursued by expanding the visited range toward higher noise, while fine details are handled by moving to lower noise.
We argue that this conflation of scale (which exposes scale-dependent editable-information regimes) with progress (how much semantic change has been accumulated) is a major contributor to the editability--fidelity tension.
Under a fixed model-call budget, editing quality is a property of the entire rollout: it depends on where computation is spent along the scale axis and whether the induced prompt-conditioned motion remains spatially localizable and directionally stable along that traversal.
We refer to this rollout-level, budgeted property as semantic granularity, and in Sec.~\ref{sec:theory} we formalize it as a functional of the trajectory together with its scale allocation.

\begin{figure}[t]
    \centering
    \includegraphics[width=0.48\linewidth, align=c]{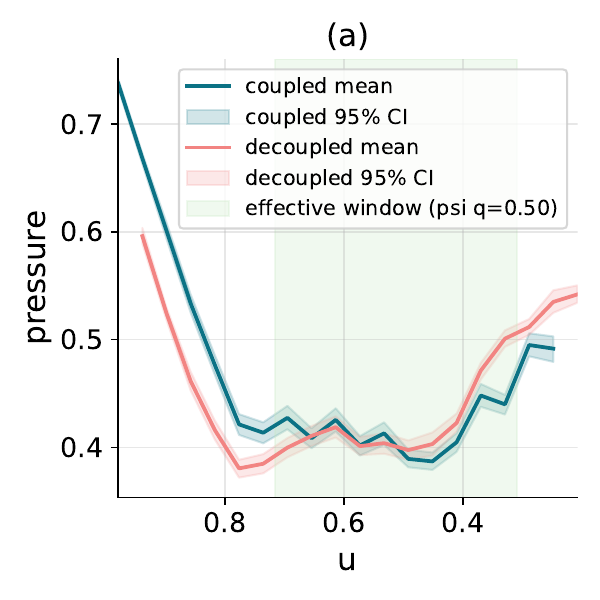}
    \includegraphics[width=0.48\linewidth, align=c]{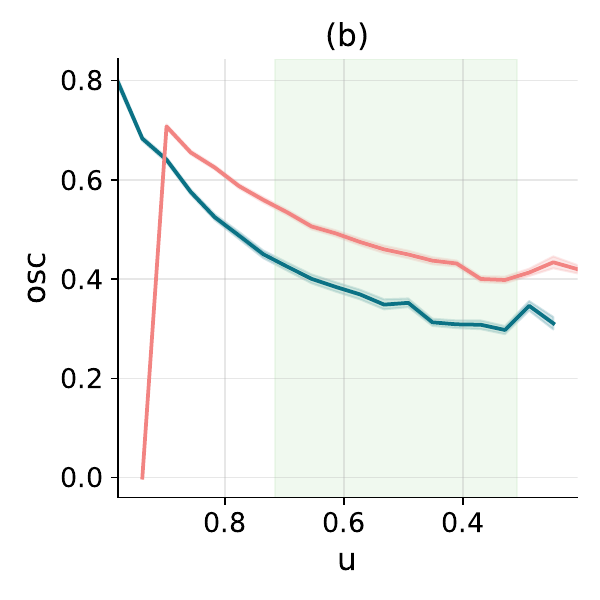}
    \vspace{-10pt}
\caption{\textbf{Empirical regime structure on the scale axis.} Sweeping the scale coordinate $u$ and aggregating per-step diagnostics reveals a stable effective window (shaded) where leakage pressure $\rho$ is minimized, flanked by higher-risk tails. (\textbf{a}) $\rho$ exhibits a clear valley over $u$. (\textbf{b}) The raw oscillation statistic $\omega$ is more schedule-dependent, so we use it as a secondary diagnostic rather than to define the window.}
    \label{fig:regimes_data}
    \vspace{-10pt}
\end{figure}

\subsection{Probing Spatial Controllability along the Scale Axis}
\label{sec:probe}

We introduce a diagnostic probe to measure the spatial controllability of the model's semantic prior across scales, independent of any specific editing trajectory.
Let $x_{\mathrm{src}}$ denote the source latent, $x$ denote the current editing state, and $u\in[0,1]$ denote a normalized scale coordinate mapped to the model coordinate via model schedule $t=\tau(u)$ (Appendix~\ref{app:tau}).
We construct a co-located pair by mixing the source with fresh noise $\epsilon\sim\mathcal N(0,I)$ as
$z^{\mathrm{src}}=(1-u)x_{\mathrm{src}}+u\epsilon$~\cite{lipman2023flow, liu2022flow},
and aligning the target query by residual shift
$z^{\mathrm{tar}}=x+(z^{\mathrm{src}}-x_{\mathrm{src}})$~\cite{kulikov2025flowedit}.
Querying the flow model with prompts $(c_{\mathrm{src}},c_{\mathrm{tar}})$ yields velocities whose difference
$\Delta V(u)=v_\theta(z^{\mathrm{tar}},\tau(u),c_{\mathrm{tar}})-v_\theta(z^{\mathrm{src}},\tau(u),c_{\mathrm{src}})$
serves as a local differential field.

Ideally, $\Delta V(u)$ should concentrate on the target object. A diffuse field indicates that the editing direction is spatially underdetermined, leading to global drift.
To quantify this, we use a feasible-region mask $M(u)$ extracted from internal representations, with no external masks and no additional model evaluations beyond the fixed step budget (see Appendix~\ref{app:mask_impl}).
We summarize the spatial underdetermination by the outside-to-total energy ratio:
\begin{equation}
\rho(u)\;=\;\frac{\left\| (1-M(u))\odot \Delta V(u)\right\|_2}{\left\|\Delta V(u)\right\|_2}\,,
\label{eq:pressure}
\end{equation}
where a larger $\rho(u)$ indicates that the prompt-induced motion leaks outside the feasible region, rendering the edit intrinsically hard to localize.
Complementarily, we track the local oscillation $\omega(u)=1-\cos(\Delta V(u),\Delta V(u-\delta))$ to measure scale-to-scale directional instability for a small $\delta$.
Sweeping $u$ while holding the probe construction fixed reveals a robust regime structure on the scale axis.
As shown in Figure~\ref{fig:regimes_data}, the leakage pressure $\rho(u)$ exhibits a stable valley at intermediate scales, indicating where prompt-conditioned motion is most spatially identifiable.
We treat this valley as a diagnostic discovery rather than an oracle that must be recomputed at inference.
Operationally, NaviEdit restricts navigation to a fixed tail window on the scheduler path, starting from a reference depth (denoted $t_{\mathrm{ref}}$) so as to exclude the extreme high-noise tail.
We denote the resulting visited scale interval by $\mathcal U_{\mathrm{eff}}$ and report sensitivity in Appendix~\ref{app:ueff_sensitivity}.
In contrast, the oscillation statistic $\omega(u)$ is more schedule-dependent, so we treat it as a secondary diagnostic for discretization and trajectory stability.

\begin{figure*}
    \centering
    \includegraphics[width=0.58\linewidth, align=c]{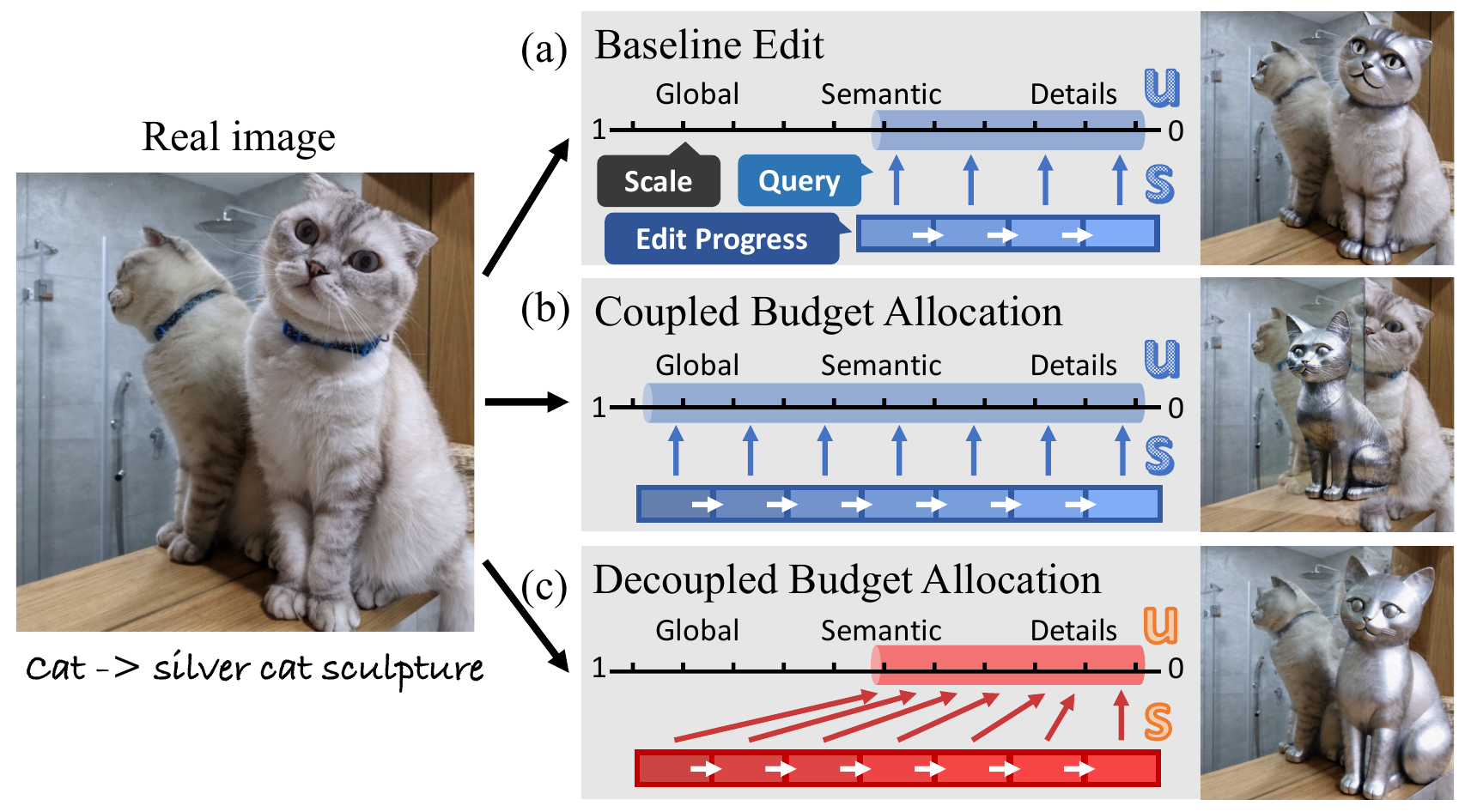}
    \includegraphics[width=0.4\linewidth, align=c]{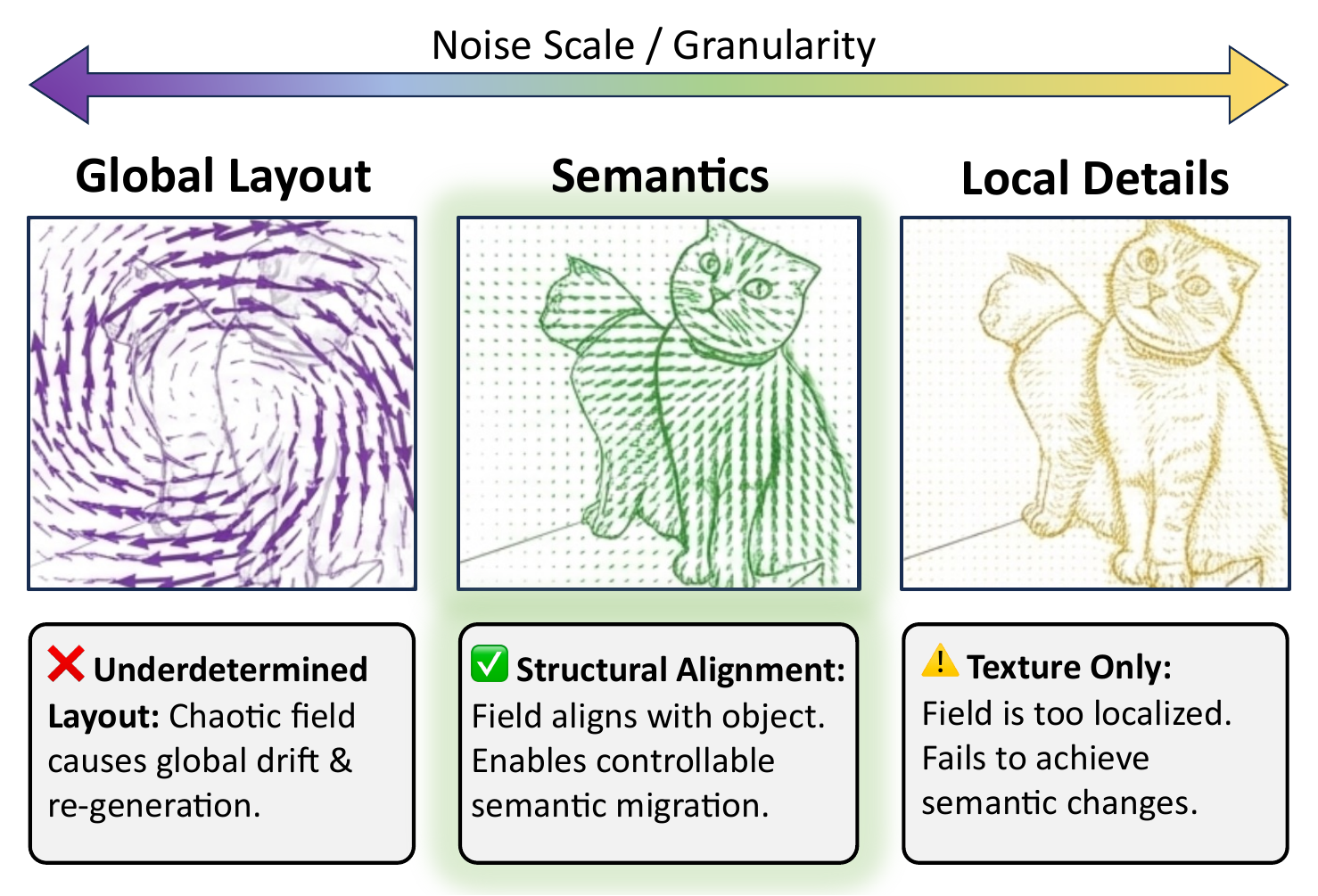}
    \caption{Coupling budget with scale range tends to push editing into high-scale regimes where layout is underdetermined. Decoupling budget as density within a fixed effective scale window concentrates computation where the co-located differential field is both strong and spatially identifiable.}
    \label{fig:decouple}
\end{figure*}

\subsection{The Editable-Information Spectrum and the Coupling Trap}
\label{sec:regimes}

By sweeping the coordinate $u$, we observe that the prompt-conditioned motion induced by the generative prior is spatially non-uniform. In Figure~\ref{fig:scale},
at high scales (large $u$, higher noise), the differential field $\Delta V(u)$ becomes spatially underdetermined: the motion is less confined to plausible edit regions, and leakage pressure $\rho(u)$ increases, correlating with drift and background hallucination.
At low scales (small $u$, lower noise), the field is dominated by high-frequency reconstruction constraints, so edits tend to become texture-entangled and struggle to alter geometry; directional behavior can also become brittle, which is captured by increased schedule-dependent oscillations in $\omega(u)$.

Between these extremes lies a semantically responsive region around the valley of $\rho(u)$ in Figure~\ref{fig:regimes_data}.
In NaviEdit, we operationalize a fixed visited interval $\mathcal U_{\mathrm{eff}}$ by selecting a tail window on the scheduler path (Sec.~\ref{sec:probe}), which avoids allocating progress to the extreme high-noise regime under a strict step budget.
Within $\mathcal U_{\mathrm{eff}}$, the pressure and oscillation diagnostics help explain which scales are more identifiable or more brittle, motivating density allocation under a fixed step budget.
We use $\omega(u)$ only to diagnose trajectory stability across scales and to motivate later density-control choices.

This regime structure exposes a limitation of the prevailing design.
Conventional schedules often couple edit magnitude with scale-range expansion: to push for stronger edits, they expand the visited range toward the high-noise tail, which allocates nonzero progress mass outside $\mathcal U_{\mathrm{eff}}$.
In our setting, the high-scale tail is empirically more underdetermined (high $\rho$), and allocating progress mass there correlates with increased drift and artifacts under a fixed model-call budget; this also matches controlled step-budget sweeps where increasing steps under coupled rules improves CLIP but can degrade background fidelity (Appendix~\ref{app:step-scaling}).
Figure~\ref{fig:decouple} summarizes this coupling trap as a compute-allocation issue rather than an update-rule issue.
Recent works on diffusion scheduling~\cite{lin2024schedule} and flow editing~\cite{kulikov2025flowedit} emphasize the importance of scale allocation, but typically do not separate it from an explicit notion of progress, making the resulting allocation partially dictated by budget-to-range heuristics instead of being a controllable decision.

\section{Methodological Framework}
\label{sec:theory}

Section~\ref{sec:motivation} surfaced a recurring paradox under a fixed model-call budget: extending an editing trajectory can reduce structural fidelity and can push the result into a generative failure mode characterized by drift, spurious objects, and tonal explosion.
We argue that this is not a tuning artifact but a modeling issue. The diffusion or flow coordinate is a scale axis that indexes what information is malleable, yet many editors treat it as a progress clock that indexes how much change has been accumulated.
Once scale is treated as a coordinate $u$ and progress as an explicit system time $s$, editing becomes a controlled integration problem. Under the hard constraints stated in Section~\ref{sec:motivation}, the only remaining design freedom is the allocation of compute along the scale axis.

\subsection{Editing is a controlled integration problem}
\label{sec:method_control}

We reuse the co-located construction and the differential field from Section~\ref{sec:probe}. When a base editor exposes reliable training-free localization signals from the same forward pass, we may optionally form an effective field $\Delta V_{\mathrm{eff}}=\Pi_{M(u)}(\Delta V)$ using an internal gate $M(u)$; otherwise the controller simply uses $\Delta V$ itself.
The only conceptual addition is to separate where we measure and actuate, namely the scale coordinate, from how much we actuate, namely progress.

\begin{definition}[Progress-Granularity decoupled editing]
\label{def:pg_edit}
An editing process is a controlled dynamical system on an explicit progress axis $s\in[0,1]$:
\begin{equation}
\frac{dx}{ds}=\frac{du}{ds}\,\Delta V_{\mathrm{eff}}\!\bigl(x(s);u(s),\epsilon(s)\bigr),
\quad
\epsilon(s)\sim\mathcal N(0,I),
\end{equation}
where $u(s)\in[0,1]$ is a monotone scale control (noise/scale coordinate).
\end{definition}

Definition~\ref{def:pg_edit} does not assume that an edit can be summarized by a single timestep.
An edit is judged by the rollout induced by integrating the co-located differential field under a fixed compute budget.

\subsection{Rollout semantic granularity is a functional}
\label{sec:method_functional}

Section~\ref{sec:regimes} shows that reliability is not a property of a single instantaneous direction, but of an entire rollout: the same update rule behaves like controlled semantic migration inside the effective window and like underdetermined re-generation in the high-scale tail.
To compare scale allocations under a fixed budget, we evaluate rollouts by a single functional defined operationally for optimization rather than as a theoretical abstraction. It is the quantity that compute allocation seeks to reduce at a fixed step budget.

We posit a nonnegative local risk density $\phi(x,u)$ with two properties used throughout the paper.
First, we assume the local risk density $\phi(x,u)$ increases with the underdetermination measured by our probes in Section~\ref{sec:probe}:
scales with higher leakage pressure $\rho(u)$ and larger directional oscillation $\omega(u)$ are assigned larger risk.
Second, when the visited scales stay inside a fixed effective window, refining density along scale (smaller $\max_k|\Delta u_k|$ at fixed span) does not increase the discretization component of the rollout.

We formalize the semantic granularity of a rollout by the functional $\mathcal G[x(\cdot),u(\cdot)]=\int_0^1 \phi(x(s),u(s))\,ds$.
For measurement and online navigation, we instantiate a discrete proxy $\widehat{\mathcal G}$ from signals already produced during editing and report its fixed hyperparameters and sensitivity in Appendix~\ref{app:phi}.

\begin{figure}[t]
\centering
\includegraphics[width=0.48\linewidth]{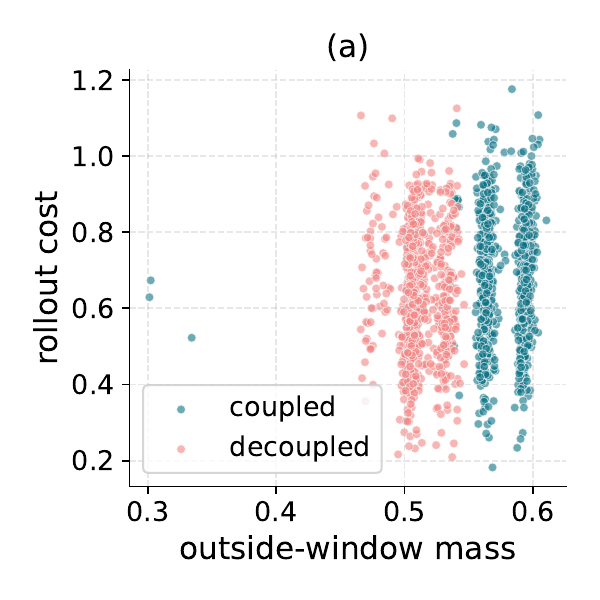}\hfill
\includegraphics[width=0.25\linewidth]{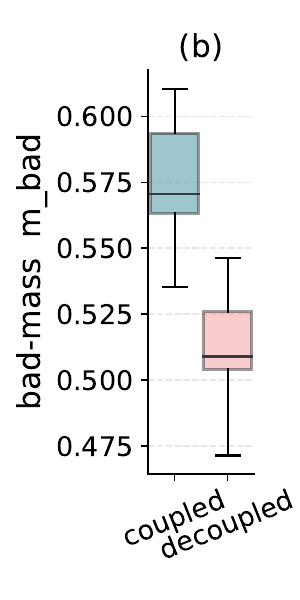}
\includegraphics[width=0.25\linewidth]{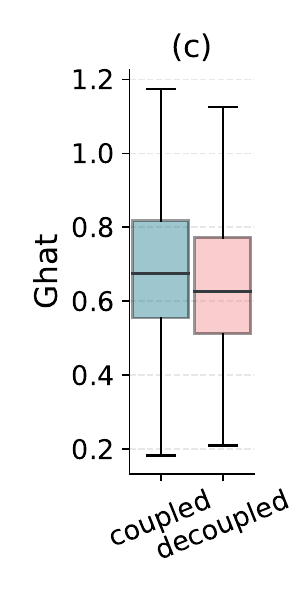}
\caption{\textbf{A quantitative illustration of Theorem~\ref{thm:incomplete_core}.} At a fixed step budget, coupling scale traversal to edit progress forces nonzero outside-window mass $m_{\mathrm{bad}}$, while decoupling concentrates progress within the effective window. \textbf{(a)} Rollout proxy cost $\widehat{\mathcal{G}}$ increases with $m_{\mathrm{bad}}$. \textbf{(b)} Coupled schedules exhibit larger $m_{\mathrm{bad}}$. \textbf{(c)} This forced allocation yields a higher irreducible rollout cost under coupling.}
\label{fig:hinge_pareto_micro}
\end{figure}

\begin{figure}[t]
\centering
\includegraphics[width=\linewidth]{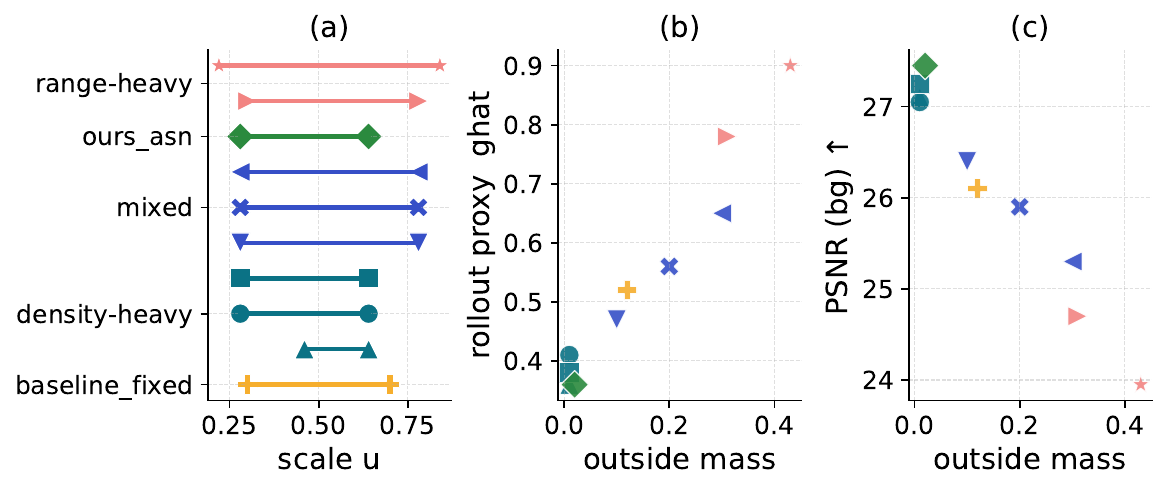}
\caption{\textbf{Density vs.\ range at a fixed step budget.} (a) Scale windows visited by different schedule families. (b) Rollout proxy $\widehat{\mathcal G}$ increases with outside-window mass $m_{\mathrm{bad}}$. (c) Background fidelity (PSNR$_{\mathrm{bg}}$) degrades as $m_{\mathrm{bad}}$ grows, indicating a risk floor induced by range expansion beyond the effective window.}
\label{fig:hinge_density_micro}
\vspace{-10pt}
\end{figure}

\subsection{Coupling scale and progress is incomplete}
\label{sec:method_claim1}

Coupling scale traversal to edit progress removes a degree of freedom that becomes necessary once editing quality is judged at the rollout level.
Under an identical update rule and identical model-call budget $K$, we compare two schedule families: a coupled family whose budget-to-strength rule expands the visited scale range as edit magnitude increases, and a decoupled family that keeps the visited range within a fixed tail window on the scheduler path and reallocates density within it.
Using $\mathcal U_{\mathrm{eff}}$ defined by this tail window (Sec.~\ref{sec:probe}), we measure the resulting outside-window progress mass $m_{\mathrm{bad}}$ and the rollout proxy cost $\widehat{\mathcal G}$.
Figure~\ref{fig:hinge_pareto_micro} shows a consistent mechanism: $\widehat{\mathcal G}$ increases with $m_{\mathrm{bad}}$, coupled schedules exhibit larger $m_{\mathrm{bad}}$, and consequently a higher empirical rollout cost floor at a fixed step budget.
This micro evidence matches the measure-allocation argument formalized in Theorem~\ref{thm:incomplete_core}.


\begin{theorem}[Incompleteness induced by coupling]
\label{thm:incomplete_core}
Assume there exist nonempty intervals $\mathcal U_{\mathrm{good}}$, $\mathcal U_{\mathrm{bad}}$ and constants
$0\le c_{\mathrm{good}}<c_{\mathrm{bad}}$ such that
$\phi(x,u)\le c_{\mathrm{good}}$ on $\mathcal U_{\mathrm{good}}$ and
$\phi(x,u)\ge c_{\mathrm{bad}}$ on $\mathcal U_{\mathrm{bad}}$ for all reachable $x$.
Assume every $u(\cdot)\in\mathcal F_{\mathrm{couple}}(K)$ assigns at least $\delta(K)>0$ progress mass to
$\mathcal U_{\mathrm{bad}}$, with $c_{\mathrm{bad}}\delta(K)>c_{\mathrm{good}}$.
Assume also that $\mathcal F_{\mathrm{decouple}}(K)$ contains a feasible control supported in $\mathcal U_{\mathrm{good}}$.
Then, for some editing instances and budgets $K$,
\begin{equation}
\inf_{u(\cdot)\in\mathcal F_{\mathrm{couple}}(K)}
\mathcal G[x(\cdot),u(\cdot)]
>
\inf_{u(\cdot)\in\mathcal F_{\mathrm{decouple}}(K)}
\mathcal G[x(\cdot),u(\cdot)] .
\end{equation}
\end{theorem}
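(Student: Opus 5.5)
The plan is a direct measure-allocation comparison: bound the coupled infimum from below and the decoupled infimum from above, then compare the two constants. Throughout, $\mathcal G[x(\cdot),u(\cdot)]=\int_0^1\phi(x(s),u(s))\,ds$ is read as an integral against the progress measure $ds$ on $[0,1]$. For a control $u(\cdot)$, the \emph{progress mass} assigned to a set $\mathcal U$ is $\mu_u(\mathcal U)=|\{s\in[0,1]: u(s)\in\mathcal U\}|$, so that $\mu_u$ is the pushforward of Lebesgue measure under $u$. Measurability of $s\mapsto u(s)$ follows from monotonicity in Definition~\ref{def:pg_edit}. I will also need $s\mapsto\phi(x(s),u(s))$ to be measurable, which I take as part of the standing setup of Section~\ref{sec:method_functional}. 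Every point $x(s)$ of a rollout is reachable, so the pointwise bounds in the hypotheses apply along any trajectory.

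\textbf{Lower bound for coupling.} Fix $K$ and any $u(\cdot)\in\mathcal F_{\mathrm{couple}}(K)$, and set $S_{\mathrm{bad}}=u^{-1}(\mathcal U_{\mathrm{bad}})$. By assumption $|S_{\mathrm{bad}}|\ge\delta(K)$. Since $\phi\ge 0$ everywhere and $\phi\ge c_{\mathrm{bad}}$ on $S_{\mathrm{bad}}$,
\[
\mathcal G[x(\cdot),u(\cdot)]\;\ge\;\int_{S_{\mathrm{bad}}}\phi(x(s),u(s))\,ds\;\ge\;c_{\mathrm{bad}}\,\delta(K).
\]
This bound does not depend on the particular coupled control, so it passes to the infimum: $\inf_{\mathcal F_{\mathrm{couple}}(K)}\mathcal G\ge c_{\mathrm{bad}}\delta(K)$.

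\textbf{Upper bound for decoupling.} Take the feasible control $u^\star\in\mathcal F_{\mathrm{decouple}}(K)$ supported in $\mathcal U_{\mathrm{good}}$, meaning $u^\star(s)\in\mathcal U_{\mathrm{good}}$ for almost every $s$. Its induced rollout gives $\phi\le c_{\mathrm{good}}$ almost everywhere on $[0,1]$, so $\mathcal G\le c_{\mathrm{good}}$. Hence $\inf_{\mathcal F_{\mathrm{decouple}}(K)}\mathcal G\le c_{\mathrm{good}}$.

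\textbf{Combining.} The two bounds give
\[
\inf_{\mathcal F_{\mathrm{couple}}(K)}\mathcal G\;\ge\;c_{\mathrm{bad}}\delta(K)\;>\;c_{\mathrm{good}}\;\ge\;\inf_{\mathcal F_{\mathrm{decouple}}(K)}\mathcal G,
\]
where the strict inequality is the hypothesis $c_{\mathrm{bad}}\delta(K)>c_{\mathrm{good}}$. The inequality is strict because the gap is a fixed positive constant, so nothing is lost by passing to infima. The ``for some editing instances and budgets $K$'' clause in the statement is then satisfied by any instance and budget for which the hypotheses hold. In fact the argument gives the inequality for every such instance.

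\textbf{Main obstacle.} The difficulty is interpretive rather than computational: the hypotheses must be read so that the integral bounds are legitimate. I will state explicitly that ``progress mass'' means Lebesgue measure in $s$, so that it matches the weighting in $\mathcal G$ rather than measure in $u$; the $du/ds$ factor in Definition~\ref{def:pg_edit} enters the dynamics but not $\mathcal G$. I will also state that the bounds on $\phi$ hold along every trajectory generated by the stochastic dynamics. If $\epsilon(s)$ makes $\mathcal G$ random, I would apply the same two pointwise bounds to each realization, or equivalently to the expectation. Both bounds hold almost surely, so the conclusion survives unchanged.
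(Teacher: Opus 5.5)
Your proposal is correct and follows essentially the same route as the paper's proof. You lower-bound every coupled rollout by $c_{\mathrm{bad}}\,m_{\mathrm{bad}}\ge c_{\mathrm{bad}}\delta(K)$ using $\phi\ge 0$, upper-bound the good-window decoupled control by $c_{\mathrm{good}}$, and close with $c_{\mathrm{bad}}\delta(K)>c_{\mathrm{good}}$; your remarks on measurability, progress mass as Lebesgue measure in $s$, and realization-wise handling of the stochastic anchors are sensible clarifications the paper leaves implicit.
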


Coupling forces nonzero progress mass in $\mathcal U_{\mathrm{bad}}$, which yields an irreducible contribution to $\mathcal G$, while a decoupled control can concentrate allocation within $\mathcal U_{\mathrm{good}}$.

\subsection{Under a fixed step budget, density dominates range}
\label{sec:method_claim2}

Under fixed compute, the central decision is where to spend model calls along the scale axis.
We isolate range from density by fixing the number of evaluations $K$ and the update rule, and varying only the visited scale window.
A minimal controlled study shows a consistent directional trend. Expanding the window into high-scale regimes increases drift and artifacts even when edit compliance is matched, whereas refining density within a fixed effective window improves anchoring.
Figure~\ref{fig:hinge_density_micro} visualizes this density versus range effect in a compact setting.

The micro-evidence suggests a structural decomposition. Range expansion inherits a regime-dependent risk floor through nonzero allocation to underdetermined scales, while density refinement within an effective window reduces discretization and active-set instability without paying that floor.
We formalize this principle as Theorem~\ref{thm:density_core}.

Theorem~\ref{thm:incomplete_core} implies a concrete compute-allocation principle.
When increasing the step budget also expands the visited range into $\mathcal U_{\mathrm{bad}}$, the rollout inherits an irreducible risk floor.
By contrast, if the visited window stays inside an effective interval, additional steps can be spent on density refinement, reducing discretization error and stabilizing the active set.



\begin{theorem}[Density beats range under a fixed step budget]
\label{thm:density_core}
Fix an anchor realization and let $\mathcal G_K$ be the implemented rollout cost.
Assume $\phi\ge0$, $\phi\le c_{\mathrm{good}}$ on
$\mathcal U^\star\subseteq\mathcal U_{\mathrm{good}}$, and
$\phi\ge c_{\mathrm{bad}}$ on $\mathcal U_{\mathrm{bad}}$.
For the considered budget $K$, assume every range-expanding rollout has
$m_{\mathrm{bad}}^K\ge\delta_K$ with
$c_{\mathrm{bad}}\delta_K-c_{\mathrm{good}}\ge\gamma>0$.
Assume also that a self-consistent rollout inside $\mathcal U^\star$ has
Euler error at most $C_{\mathrm E}/K$ and that $\phi$ is
$L_\phi$-Lipschitz in $x$ there.
If $K>L_\phi C_{\mathrm E}/\gamma$, then density refinement inside
$\mathcal U^\star$ attains lower $\mathcal G_K$ than any range-expanding
rollout into $\mathcal U_{\mathrm{bad}}$.
\end{theorem}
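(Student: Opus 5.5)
The plan is to compare an upper bound on the implemented cost $\mathcal G_K$ of the density-refined rollout with a lower bound that holds uniformly over all range-expanding rollouts, and to show that the gap is positive once $K>L_\phi C_{\mathrm E}/\gamma$. Throughout I write $\mathcal G_K=\int_0^1\phi(x_K(s),u(s))\,ds$, where $x_K$ is the implemented Euler trajectory and progress mass is normalized to total one.

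\textbf{Lower bound for range expansion.} Take any range-expanding rollout $(x_K,u)$ and split the progress axis into $S_{\mathrm{bad}}=\{s:u(s)\in\mathcal U_{\mathrm{bad}}\}$ and its complement. On $S_{\mathrm{bad}}$ the hypothesis gives $\phi\ge c_{\mathrm{bad}}$, and on the complement $\phi\ge 0$. The measure of $S_{\mathrm{bad}}$ is $m^K_{\mathrm{bad}}\ge\delta_K$. Hence
\begin{equation*}
\mathcal G_K\ge c_{\mathrm{bad}}\,m^K_{\mathrm{bad}}\ge c_{\mathrm{bad}}\delta_K\ge c_{\mathrm{good}}+\gamma.
\end{equation*}
This bound holds regardless of discretization error, because it only uses pointwise bounds on $\phi$ valid for all reachable $x$. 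The same argument as in Theorem~\ref{thm:incomplete_core} applies.

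\textbf{Upper bound for density refinement.} Let $u^\star(\cdot)$ be the self-consistent control supported in $\mathcal U^\star$. Let $x^\star$ be its exact solution of the dynamics in Definition~\ref{def:pg_edit}, and $x_K$ its $K$-step implementation. The intended decomposition is
\begin{equation*}
\mathcal G_K=\int_0^1\phi(x^\star(s),u^\star(s))\,ds+\int_0^1\bigl[\phi(x_K(s),u^\star(s))-\phi(x^\star(s),u^\star(s))\bigr]ds.
\end{equation*}
The first term is at most $c_{\mathrm{good}}$ because $u^\star(s)\in\mathcal U^\star$. The second term is controlled by Lipschitz continuity and the Euler bound: $|\phi(x_K,u^\star)-\phi(x^\star,u^\star)|\le L_\phi\|x_K(s)-x^\star(s)\|\le L_\phi C_{\mathrm E}/K$ pointwise in $s$. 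Integrating over $[0,1]$ gives $\mathcal G_K\le c_{\mathrm{good}}+L_\phi C_{\mathrm E}/K$.

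\textbf{Conclusion and main obstacle.} If $K>L_\phi C_{\mathrm E}/\gamma$, then $L_\phi C_{\mathrm E}/K<\gamma$. The density-refined cost is then strictly below $c_{\mathrm{good}}+\gamma$, which is at most the cost of every range-expanding rollout.

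The delicate step is the upper bound. It requires the Euler error bound to hold uniformly in $s$ along the whole implemented trajectory, not only at the endpoint. It also requires the implemented states to stay in the region where $\phi$ is $L_\phi$-Lipschitz. I would read the assumption ``Euler error at most $C_{\mathrm E}/K$'' as a sup-norm bound over $s$, with the piecewise-constant or linear interpolation of the iterates defining $x_K(s)$.

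A second subtlety is whether the pointwise bound $\phi\le c_{\mathrm{good}}$ applies at $x^\star$ or at $x_K$. The decomposition above uses it at the exact trajectory $x^\star$ and lets the Lipschitz term absorb the difference; this is precisely why the threshold on $K$ appears. If the bound instead holds at all reachable points, the Lipschitz term is unnecessary and the result follows already for any $K$.
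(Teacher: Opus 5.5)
Your proposal is correct and follows the paper's proof: the same floor $\mathcal G_K\ge c_{\mathrm{bad}}m^K_{\mathrm{bad}}\ge c_{\mathrm{bad}}\delta_K$ for range-expanding rollouts, and the same Lipschitz-plus-Euler bound $\mathcal G_K\le c_{\mathrm{good}}+L_\phi C_{\mathrm E}/K$ for the density-refined rollout. The only difference is that the paper defines $\mathcal G_K$ as the discrete sum $\sum_k\Delta s_k\,\phi(x_k,u_k)$ over the iterates, so it needs the Euler bound only at grid points, $\max_k\|x_k^{(K)}-x^\star(s_k)\|\le C_{\mathrm E}/K$, and your concern about a sup-norm bound on an interpolated trajectory does not arise.
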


Appendix~\ref{app:density} proves the claim by comparing the implemented
bad-regime risk floor with the good-window Euler error bound.

\subsection{Contract: self-consistent discretization}
\label{sec:method_contract}

Theorems~\ref{thm:incomplete_core} and ~\ref{thm:density_core} are statements about allocating scale measure under a fixed compute budget.
They only translate to practice if each discrete step measures and actuates at the same scale coordinate.
In flow models, the scale coordinate appears in three places within a single step: the noise mixing that constructs the co-located anchors, the model query through the internal timestep mapping $\tau(u)$, and the scale increment used to apply the update.
If these operations use inconsistent scale values, the measured differential velocity no longer corresponds to the update that is applied, and the rollout accumulates a systematic bias that manifests as drift and artifacts.

\begin{theorem}[Self-consistency contract]
\label{thm:contract_core}
Let $\{u_k\}_{k=0}^{K}$ be a monotone scale sequence and $\epsilon_k\sim\mathcal N(0,I)$ be fresh anchors.
If a step uses inconsistent scales for (i) mixing, (ii) model querying, and (iii) update increments, then the update does not correspond to any consistent discretization of Definition~\ref{def:pg_edit} and incurs a systematic bias term that can manifest as drift and artifacts.
Conversely, if the step is self-consistent and uses the same $u_k$ for mixing and querying, and uses $\Delta u_k=u_{k+1}-u_k$ for actuation, then the update
$$
x_{k+1}=x_k+\Delta u_k\,\Delta V_{\mathrm{eff}}(x_k;u_k,\epsilon_k)
$$
is a first-order consistent discretization of Definition~\ref{def:pg_edit} in the effective window.
\end{theorem}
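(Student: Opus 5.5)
The plan is to treat Definition~\ref{def:pg_edit} as an ODE in the progress variable $s$. Fix a realization of the anchors, or work with the conditional expectation over $\epsilon$. Writing $F(x,u):=\Delta V_{\mathrm{eff}}(x;u,\epsilon)$, the change of variables $du=u'(s)\,ds$ turns the dynamics into $dx/du=F(x,u)$ along the monotone control. For the converse (positive) direction, we will take a grid $s_k$ with $u_k=u(s_k)$ and perform a local truncation analysis on one step. Integrating $dx/du=F(x,u)$ from $u_k$ to $u_{k+1}$ gives
\begin{equation*}
x(u_{k+1})=x(u_k)+\Delta u_k\,F(x(u_k),u_k)+R_k .
\end{equation*}
Assuming $F$ is Lipschitz in $x$ and in $u$ on $\mathcal U_{\mathrm{eff}}$, with constants $L_x$ and $L_u$, we expect $|R_k|\le \tfrac12(L_u+L_x\|F\|_\infty)\Delta u_k^2$. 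The self-consistent step matches the exact increment up to this $O(\Delta u_k^2)$ remainder, so the local truncation error divided by $\Delta u_k$ vanishes as $\max_k|\Delta u_k|\to0$. That is first-order consistency, and a standard Lipschitz stability plus Gronwall argument upgrades it to $O(\max_k|\Delta u_k|)$ global error. This is also the $C_{\mathrm E}/K$ bound used in Theorem~\ref{thm:density_core}.

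For the inconsistency direction, we will write a generic step as
\begin{equation*}
x_{k+1}=x_k+\Delta'\,\widetilde F_k ,
\end{equation*}
where $\widetilde F_k$ is the differential velocity computed with mixing scale $a_k$ in $z^{\mathrm{src}}=(1-a_k)x_{\mathrm{src}}+a_k\epsilon_k$ and query time $\tau(b_k)$, and $\Delta'$ is the applied increment. We then decompose the deviation from the consistent step as
\begin{equation*}
x_{k+1}-x_k-\Delta u_k F(x_k,u_k)=(\Delta'-\Delta u_k)\,\widetilde F_k+\Delta u_k\bigl(\widetilde F_k-F(x_k,u_k)\bigr).
\end{equation*}
The second term has two sources. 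One is the state/timestep mismatch $a_k\ne b_k$, where the network is queried off its training marginal. The other is the gap between $F$ evaluated at $a_k$ versus $u_k$. In each case we will show the term is $O(\Delta u_k)$ times a quantity that does not vanish as the grid refines, namely $|a_k-u_k|$, $|b_k-u_k|$, or the ratio $\Delta'/\Delta u_k-1$, whenever the mismatch is a fixed offset or ratio. Summed over $K$ steps, this leaves an $O(1)$ bias of the form
\begin{equation*}
\int\bigl(\widetilde F-F\bigr)\,du .
\end{equation*}
Hence the scheme is consistent with a perturbed vector field rather than with Definition~\ref{def:pg_edit}, and no reparametrization of $u(\cdot)$ can absorb the perturbation unless $\widetilde F$ is a rescaling of $F$ along the same scale.

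The main obstacle is making ``systematic bias'' precise without overclaiming. We have to rule out the case where an inconsistent scheme accidentally equals a consistent discretization for a different monotone schedule. I would handle this by noting that a relabeling $u\mapsto\tilde u$ changes mixing, querying and increment together. An inconsistency that alters only one or two of these three therefore cannot be a relabeling, except on degenerate fields where $\partial_u F$ and $\partial_x F$ vanish. Accordingly, I would state the negative direction as a generic, nondegenerate claim: the bias is nonzero whenever the off-schedule query $\widetilde F$ differs from $F$ on a set of positive progress measure. The claim that the bias manifests as drift is interpretive, so I would support it only by remarking that the bias integrates into a low-frequency offset of $x$ that is not gated by $M(u)$.
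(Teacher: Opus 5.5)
Your proposal is correct in its main lines, but it argues differently from the paper. The paper's proof is an identification. It picks the progress parametrization $\Delta s_k=\Delta u_k/(u(1)-u(0))$, so that the explicit Euler step of Definition~\ref{def:pg_edit} is literally $x_{k+1}=x_k+\Delta u_k\,\Delta V_{\mathrm{eff}}(x_k;u_k,\epsilon_k)$. It then takes ``consistent discretization'' to mean ``is this Euler step'' and makes no truncation estimate. For the converse it analyzes only the mismatch-query mode. It argues that $du/ds$ can rescale the field but cannot move the query from $\tau(u_k)$ to $\tau(\tilde u_k)$, so the step is not of Euler form for the \emph{same} control. It then defines $b_k=\Delta u_k(\widetilde{\Delta V}_{\mathrm{eff}}-\Delta V_{\mathrm{eff}})$, writes $x_K$ as the consistent sum plus $\sum_k b_k$, and asserts the mix and step modes by analogy.

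You instead differ in three ways:
\begin{itemize}
\item You eliminate $s$ via $dx/du=F(x,u)$, which shows the update is Euler in $u$ for every monotone control, not only for $s\propto u$.
\item You prove the $O(\Delta u_k^2)$ truncation bound and a Gronwall global bound.
\item You treat all three mismatch modes with one decomposition, under the standard asymptotic notion of consistency.
\end{itemize}
This buys explicit hypotheses for what ``in the effective window'' must mean. It also gives a derivation of the $C_{\mathrm E}/K$ bound that Theorem~\ref{thm:density_core} merely assumes, provided $\max_k|\Delta u_k|=O(1/K)$. And you correctly restrict the negative direction to mismatches that do not vanish under refinement. For example, if $v_\theta$ is Lipschitz in $t$, the query offset $u_{k+\delta}$ used in the paper's own ablation (Figure~\ref{fig:axis_mismatch}) has $|u_{k+\delta}-u_k|=O(\delta/K)$, so it is still first-order consistent. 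The paper's fixed-$K$, exact-form argument counts any $\tilde u_k\neq u_k$ as a violation. So its $b_k$ is a finite-budget statement rather than an asymptotic one. Conversely, the paper's identification never has to state regularity, because it is left implicit in ``in the effective window.'' Your Lipschitz-in-$u$ hypothesis visibly excludes the quantile/EMA gate of Appendix~\ref{app:mask_impl}, which is neither continuous in $u$ nor memoryless. As written, you cover $M\equiv 1$ or a smooth gate.

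Two points need repair.

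First, your relabeling argument assumes a reparametrization changes mixing, querying and increment together. But increments are invariant under constant shifts. Mixing and querying both at $u_k+c$ with increment $\Delta u_k$ is exactly the self-consistent scheme for $\hat u_k=u_k+c$, for every field, not only degenerate ones. You aim to exclude every monotone schedule, whereas the paper only excludes the same control. So define inconsistency intrinsically: the mixing scale differs from the query scale, or the increment differs from the difference of consecutive query scales. Then let your non-proportionality assumption do the rest.

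Second, a field that is Lipschitz in $u$ presupposes a frozen, or regularly varying, anchor. With genuinely fresh i.i.d.\ $\epsilon_k$, your expectation route leaves the martingale term $\sum_k\Delta u_k\bigl(F(x_k,u_k,\epsilon_k)-\bar F(x_k,u_k)\bigr)$, where $\bar F=\mathbb E_\epsilon F$. Its variance is of order $\max_k|\Delta u_k|$, so you get only order-$\tfrac12$ mean-square convergence to the averaged ODE, not first order. State first-order consistency per frozen anchor. The paper's notation $\epsilon(s_k)$ hides the same issue.
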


The proof is given in Appendix~\ref{app:contract}.
We validate Theorem~\ref{thm:contract_core} with axis-mismatch ablations that independently perturb the scale used for querying, stepping, or mixing while keeping the update rule, budget, and fresh-noise anchoring fixed (see Figure~\ref{fig:axis_mismatch}).

\begin{figure}[t]
\centering
\includegraphics[width=\linewidth]{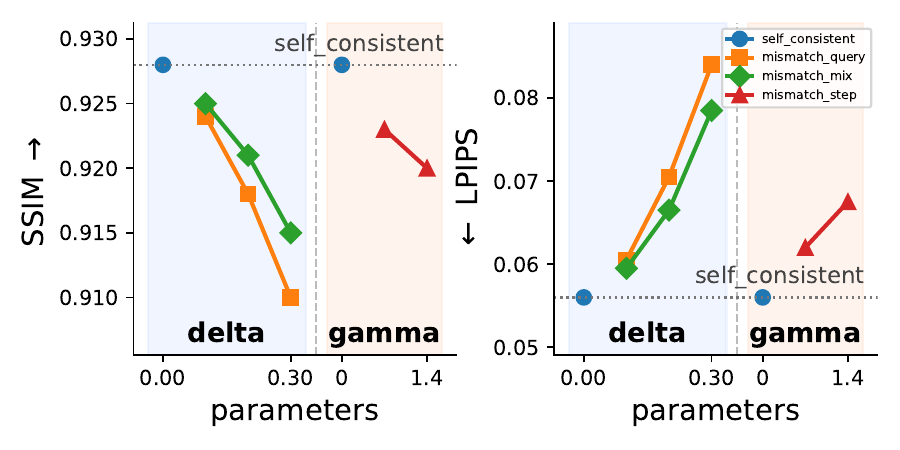}
\vspace{-20pt}
\caption{Axis-mismatch ablation for Theorem~\ref{thm:contract_core}. 
Left: qualitative grid as $|\delta|$ increases (mismatch-query uses $u_{k+\delta}$ for model query while keeping mix/update self-consistent).
Right: drift and compliance metrics vs $|\delta|$ under fixed $K$.}
\label{fig:axis_mismatch}
\vspace{-10pt}
\end{figure}

\begin{figure}[t]
\centering
\includegraphics[width=\linewidth]{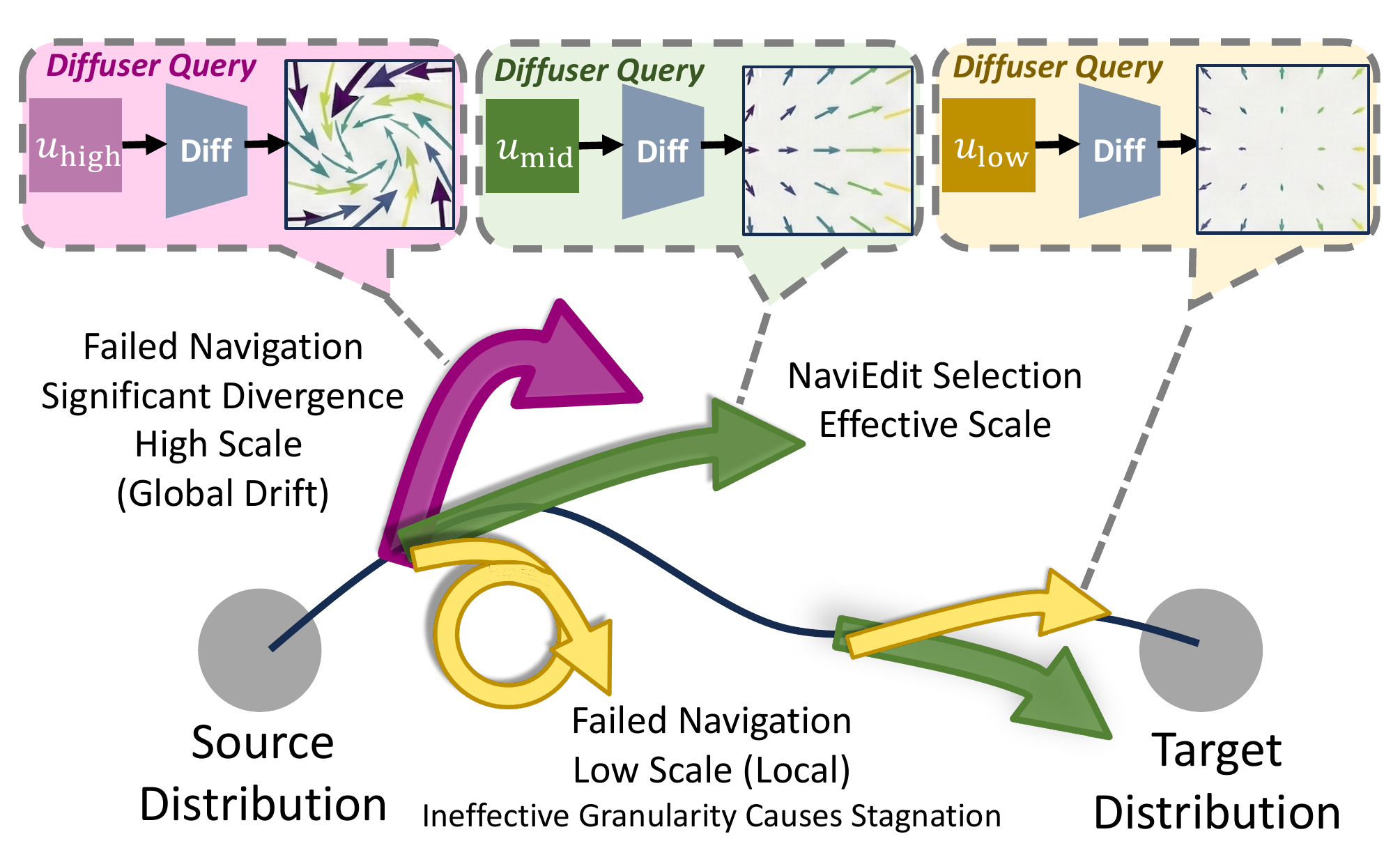}
\caption{\textbf{Editing depends on the scale regime.} High scales can diverge while low scales stagnate; NaviEdit allocates density within an effective scale window for stable progress.}
\label{fig:navigation}
\vspace{-10pt}
\end{figure}

\begin{table*}[t]
\centering
\caption{Quantitative comparison on PIE-Bench. The best/second/third results in each numeric column are highlighted with \colorbox{RankFirst}{yellow}/\colorbox{RankSecond}{orange}/\colorbox{RankThird}{blue} backgrounds, respectively.}
\resizebox{\textwidth}{!}{%
\begin{tabular}{@{}l l | c | cccc | cc@{}}
\toprule
\multirow{2}{*}{\textbf{Type}} & \multirow{2}{*}{\textbf{Method}} & \textbf{Struct. } & \multicolumn{4}{c|}{\textbf{Background Preservation}} & \multicolumn{2}{c}{\textbf{CLIP Semantics}} \\
\cmidrule(lr){3-3} \cmidrule(lr){4-7} \cmidrule(lr){8-9}
& & Dist.${}_{\text{10}^3}\downarrow$ & \textbf{PSNR}$\uparrow$ & \textbf{MSE}${}_{\text{10}^3}\downarrow$ & \textbf{SSIM}${}_{\text{10}^2}\uparrow$ & \textbf{LPIPS}${}_{\text{10}^3}\downarrow$ & \textbf{Whole}$\uparrow$ & \textbf{Edited}$\uparrow$ \\
\midrule

\multirow{8}{*}{\shortstack[c]{Fixed Schedule}}
& DiffEdit \cite{couairon2023diffedit} (SD1.4) & 22.39 & 24.09 & 5.14 & 76.72 & 80.96 & 23.34 & 20.44 \\
& DDIM \cite{song2020denoising} + MasaCtrl \cite{cao2023masactrl} & 28.79 & 21.25 & 8.58 & 80.11 & 106.59 & 24.13 & 21.13 \\
& Direct Inversion \cite{ju2023direct} + MasaCtrl \cite{cao2023masactrl} & 24.46 & 21.78 & 7.99 & 81.74 & 87.38 & 24.42 & 21.38 \\
& DDIM \cite{song2020denoising} + PnP \cite{tumanyan2023pnp} & 28.20 & 21.26 & 8.42 & 78.90 & 113.58 & 25.45 & 22.54 \\
& Direct Inversion \cite{ju2023direct} + PnP \cite{tumanyan2023pnp} & 24.27 & 21.43 & 8.10 & 79.52 & 106.26 & 25.48 & \ranksecond{22.63} \\
& InfEdit \cite{xu2023inversion} (SD1.4) & 18.06 & 25.62 & 5.88 & 85.02 & 55.69 & 24.92 & 22.08 \\
& FlowEdit \cite{kulikov2025flowedit} (SD3) & 14.64 & 22.46 & 7.38 & 84.08 & 103.00 & 25.91 & 22.50 \\
& FlowAlign \cite{kim2026flowalign} (SD3) & \ranksecond{6.21} & \rankthird{27.78} & \ranksecond{2.14} & \rankthird{92.41} & \rankfirst{34.47} & 25.44 & 21.80 \\
\midrule

\multirow{4}{*}{\shortstack[c]{Reschedule}}
& Tino-Edit \cite{chen2024tino} & 17.64 & 22.61 & 6.15 & 82.91 & 87.23 & 25.02 & 22.10 \\
& SYE \cite{lin2024schedule} (DDIM + PnP) & 27.17 & 21.73 & 8.12 & 87.45 & 110.64 & 24.44 & 21.26 \\
& SYE \cite{lin2024schedule} (Direct Inversion + PnP) & 24.11 & 21.57 & 7.94 & 80.24 & 102.62 & 24.52 & 21.39 \\
& TurboEdit \cite{deutch2024turboedit} (SDXL-Turbo) & 13.80 & 21.44 & 9.49 & 80.08 & 108.60 & 24.66 & 21.70 \\
\midrule

\multirow{3}{*}{\shortstack[c]{Navi Controller}}
& \textbf{Navi-FlowEdit ($M\equiv 1$) \cite{kulikov2025flowedit} (SD3)} & 14.25 & 22.54 & 6.86 & 89.36 & 92.47 & \rankthird{26.01} & \rankthird{22.59} \\
& \textbf{Navi-FlowEdit + gate \cite{kulikov2025flowedit} (SD3)} & \rankthird{10.67} & \ranksecond{27.94} & \rankthird{2.64} & \rankfirst{93.85} & \rankthird{48.74} & \rankfirst{26.18} & \rankfirst{22.72} \\
& \textbf{Navi-FlowAlign ($M\equiv 1$) \cite{kim2026flowalign} (SD3)} & \rankfirst{5.40} & \rankfirst{28.33} & \rankfirst{2.09} & \ranksecond{93.40} & \ranksecond{34.49} & \ranksecond{26.15} & 22.44 \\
\bottomrule
\end{tabular}%
}

\label{tab:main_comparison}
\end{table*}

\begin{figure*}[t]
\centering
\includegraphics[width=1\linewidth]{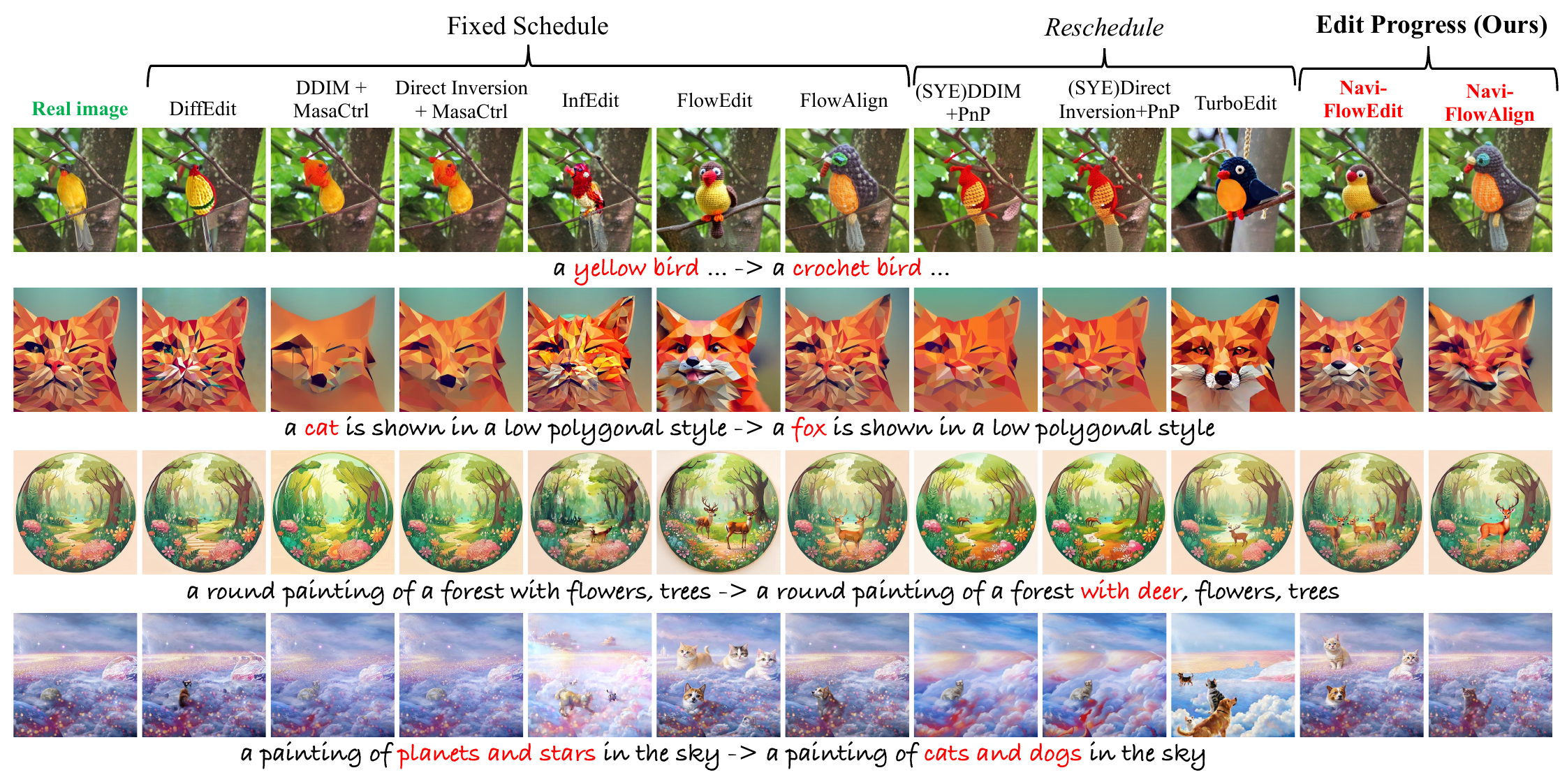}
\vspace{-10pt}
\caption{\textbf{Comparison of edited results.} Real images are in the first column. Prompts are noted under each row.}
\label{fig:piebench}
\vspace{-10pt}
\end{figure*}

\begin{figure}[t]
\centering
\includegraphics[width=1\linewidth]{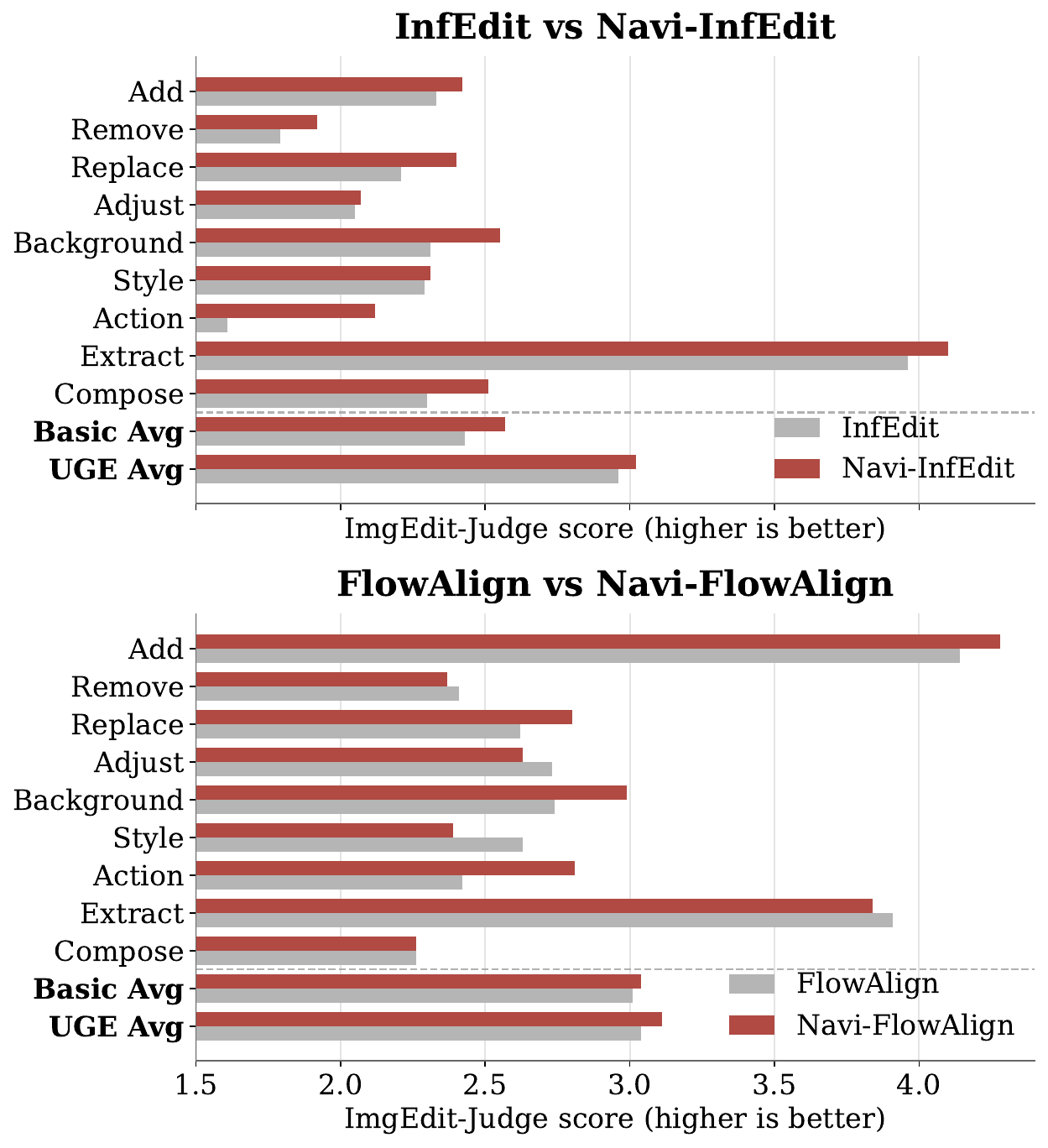}
\vspace{-10pt}
\caption{\textbf{Category-wise results on ImgEdit-Bench across compatible editors.} Top: InfEdit vs.\ Navi-InfEdit. Bottom: FlowAlign vs.\ Navi-FlowAlign. Both Navi variants use $M\equiv 1$. The dashed separator isolates the Basic and UGE averages from the nine Basic categories.}
\label{fig:img-bench}
\vspace{-10pt}
\end{figure}

\subsection{From theory to method: a rollout-level Navi controller under hard constraints}
\label{sec:method_impl}

We instantiate the theory under the constraints in Sec.~\ref{sec:motivation}: no training, exactly $K$ model evaluations, and fresh noise at every step.
The Navi controller implements the self-consistency contract of Theorem~\ref{thm:contract_core}: each step measures and actuates at the same scale, i.e., co-located mixing, model querying at $t=\tau(u_k)$, and the update step size are all tied to a single $u_k$ and its increment $\Delta u_k$.
For naming, we use \textbf{NaviEdit} throughout the method development to denote the controller itself, and reserve the shorthand \textbf{Navi-X} only for experimental settings where the underlying base editor $X$ must be identified explicitly.

Under this contract, the only remaining design freedom for \emph{progress control} at a fixed step budget is the scale allocation $\{u_k\}$.
Guided by Theorem~\ref{thm:density_core}, we allocate compute by increasing density within an effective scale window instead of expanding range into high-risk regimes.
Practically, we parameterize a monotone traversal by a continuous coordinate $p\in[0,1]$ on the model's discrete path; choosing $\{p_k\}$ induces $\{u_k\}$ and $\{\Delta u_k\}$.

The controller acts only on the rollout: in any compatible prompt-pair editor, it replaces the coupled budget-to-range traversal while leaving the pretrained model unchanged. The running example below uses a FlowEdit/SD3 instantiation, and Appendix~\ref{app:imgedit_transfer} studies additional results on InfEdit and FlowAlign. When an instantiation uses the optional feasible-region gate, we state it explicitly; otherwise we write $M\equiv 1$. In this paper, the default FlowEdit system is reported as \textbf{Navi-FlowEdit + gate}, while \textbf{Navi-FlowEdit ($M\equiv 1$)}, \textbf{Navi-InfEdit ($M\equiv 1$)}, and \textbf{Navi-FlowAlign ($M\equiv 1$)} denote ungated variants.

Optionally, we adapt the next step along $p$ online using only already-computed per-step signals (a discrete proxy of $\phi$), without additional model calls. The complete inference pipeline is given in Appendix~\ref{app:algorithm} (Algorithm~\ref{alg:naviedit_pipeline}).

\section{Experimental Validation}
\label{sec:experiments}

\subsection{Experimental Setup}

\noindent\textbf{Dataset and evaluation metrics.} 
We evaluate on two complementary benchmarks. PIE-Bench~\cite{ju2023direct} serves as the main preservation benchmark: it comprises 700 images with ground-truth masks. We assess performance using Structure Distance for global geometry, PSNR, MSE, SSIM, and LPIPS for background fidelity (computed on non-edited regions), together with CLIP-Whole/Edited scores for semantic alignment. To test the controller beyond masked local editing, we also evaluate on ImgEdit-Bench~\cite{ye2025imgedit}, following its Basic and UGE protocols.

\noindent\textbf{ImgEdit-Bench adaptation.}
ImgEdit-Bench provides a source image and an edit instruction, whereas prompt-pair differential editors require both a source description and a target description. We therefore use a single processed-annotation layer generated with Qwen2.5-VL~\cite{bai2025qwen25vl}, which converts each sample into shared $(\text{src\_prompt}, \text{tar\_prompt})$ pairs. InfEdit, FlowAlign, and their Navi variants all consume this same adaptation. Following ImgEdit-Judge, we report per-category Basic scores together with the Basic and UGE averages; the multi-turn subsets are retained only for qualitative inspection.

\noindent\textbf{Naming convention.}
In the main text, \emph{NaviEdit} denotes the controller/framework. We use labels such as \emph{Navi-FlowEdit} and \emph{Navi-FlowAlign} only when a table, benchmark, or cross-editor experiment needs to distinguish which base editor the controller is instantiated on. When the optional feasible-region gate is enabled, we write it explicitly as \emph{+ gate}; when it is disabled, we write \emph{$M\equiv 1$}.

\noindent\textbf{Implementation details.} 
All experiments are conducted on a single NVIDIA RTX 3090 GPU. 
The controller is entirely training-free. 
It decouples the editing progress axis from the model scale coordinate and navigates a fixed tail window on the scheduler path, denoted by $\mathcal U_{\mathrm{eff}}$ and parameterized by $t_{\mathrm{ref}}$ (Appendix~\ref{app:ueff}). 
In the main PIE-Bench comparison, the three Navi-controller rows in Table~\ref{tab:main_comparison} use schedule step $N=50$, editing budget $K=50$, and $t_{\mathrm{ref}}=42$. 
The ImgEdit-Bench transfer results use ungated \emph{Navi-InfEdit ($M\equiv 1$)} and \emph{Navi-FlowAlign ($M\equiv 1$)}. 
For fair comparison, Table~\ref{tab:main_comparison} uses each baseline's official configuration and recommended backbone, e.g., SD1.4 for DiffEdit/InfEdit, SDXL-Turbo for TurboEdit, SD3 for FlowEdit, and SD3 for FlowAlign. 
Baselines use their official editing-step settings. 
PIE-Bench provides ground-truth masks; we use them only to define edited and background pixels for evaluation, and do not feed any external masks into the Navi controller or any baseline.

\subsection{Comparison with Prior Methods}

\noindent\textbf{Quantitative Results on PIE-Bench.}
Table~\ref{tab:main_comparison} reports the strongest deployed systems together with one controlled decomposition pair. The default FlowEdit instantiation, \emph{Navi-FlowEdit + gate}, substantially improves over official FlowEdit on every reported PIE-Bench metric. The two FlowEdit-based Navi rows, \emph{Navi-FlowEdit ($M\equiv 1$)} and \emph{Navi-FlowEdit + gate}, isolate the role of the optional gate: under the same controlled setting, the gate mainly improves preservation. The ungated FlowAlign row, \emph{Navi-FlowAlign ($M\equiv 1$)}, shows that the same decoupled controller remains effective even without any additional gating module. Among the deployed rows, \emph{Navi-FlowAlign ($M\equiv 1$)} attains the best Dist/PSNR/MSE and competitive SSIM/LPIPS, while \emph{Navi-FlowEdit + gate} attains the best SSIM and the highest CLIP-Whole/CLIP-Edited scores.

\noindent\textbf{ImgEdit-Bench Results Across Compatible Editors.}
Figure~\ref{fig:img-bench} summarizes category-wise results on ImgEdit-Bench across compatible editors. Both Navi variants are ungated, so the gains isolate the decoupled controller rather than the optional gate. 
Navi improves both the Basic and UGE averages for InfEdit and FlowAlign. 
The gains are broad for InfEdit. For FlowAlign, the average gains are smaller and not every Basic category improves, but the strongest positive changes appear in background, action, and replace, where trajectory drift is more likely to matter.
This supports our claim that the controller remains effective across compatible editors rather than only in the original FlowEdit instantiation. Full category-wise scores are reported in Appendix~\ref{app:imgedit_transfer}.

\noindent\textbf{Qualitative Results on PIE-Bench.}
Figure~\ref{fig:piebench} visualizes typical outcomes on PIE-Bench. \emph{Navi-FlowEdit + gate} realizes large semantic changes while keeping non-edited regions stable, whereas coupled baselines more often exhibit global drift or background artifacts. Appendix~\ref{app:step-scaling} isolates the effect of decoupled scale allocation under a fixed gating configuration, while Table~\ref{tab:main_comparison} and Appendix~\ref{app:mask_ablation} show that the optional gate mainly improves background preservation. Appendix~\ref{app:imgedit_transfer} provides the full ImgEdit-Bench breakdown, and the user study (Appendix~\ref{sec:user_study}) is consistent with these trends.

\begin{table}[t]
\centering
\caption{\textbf{Across flow backbones, controlled $K=28$ ablation.} 
Using the same differential editing mechanism and matched $K=28$ budget, we compare coupled and decoupled scale allocation on SD3, SD3.5, and FLUX.1 [dev]. 
Decoupling improves SSIM/PSNR across all three backbones while keeping CLIP-Whole/Edited comparable or slightly higher.}
\label{tab:model_ab}
\resizebox{\linewidth}{!}{%
\begin{tabular}{lcccc}
\toprule
\multirow{2}{*}{Method} & \multicolumn{2}{c}{Background} & \multicolumn{2}{c}{CLIP Semantics} \\
\cmidrule(lr){2-3} \cmidrule(lr){4-5}
 & SSIM $\uparrow$ & PSNR $\uparrow$ & Whole $\uparrow$ & Edited $\uparrow$ \\
\midrule
SD3 \cite{esser2024scaling} (couple) & 88.22 & 22.18 & 26.01 & 22.55 \\
SD3 \cite{esser2024scaling} (decouple) & \textbf{93.22} & \textbf{27.81} & 26.15 & 22.67 \\
SD3.5 \cite{esser2024scaling} (couple) & 85.68 & 22.01 & 26.57 & 22.91 \\
SD3.5 \cite{esser2024scaling} (decouple) & 92.32 & 27.45 & 26.77 & 23.32 \\
FLUX.1 [dev] \cite{flux2024} (couple) & 82.14 & 21.81 & 27.02 & 23.35 \\
FLUX.1 [dev] \cite{flux2024} (decouple) & 91.75 & 26.83 & \textbf{27.06} & \textbf{23.42} \\
\bottomrule
\end{tabular}
}
\end{table}

\begin{figure}[t]
\centering
\includegraphics[width=\linewidth]{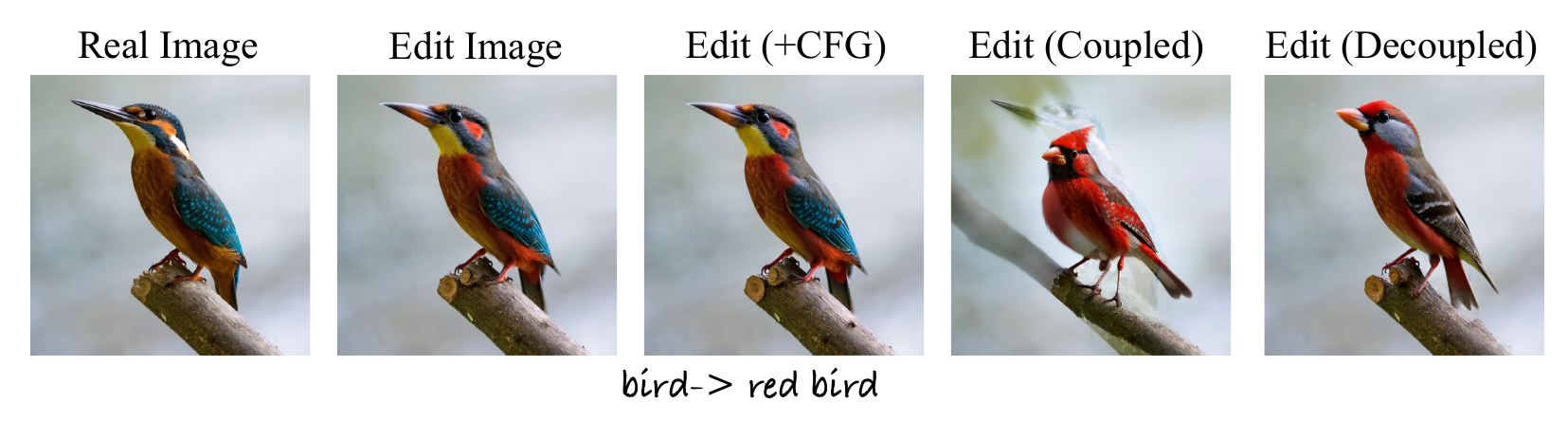}
\vspace{-10pt}
\caption{\textbf{Limits of increasing classifier-free guidance (CFG).}
Higher CFG does not reliably prevent drift under coupling, while decoupling preserves structure at similar edit strength.}
\label{fig:cfg_ab}
\end{figure}

\section{Discussions}

\noindent\textbf{Can rescheduling replicate NaviEdit?}
No. The distinction is not the shape of the schedule, but the explicit separation of progress from scale together with the self-consistent step contract. Once progress is modeled as its own axis, the design problem becomes scale allocation under a fixed budget. Any rule that mixes, queries, and updates at inconsistent scales is not a valid discretization of the controlled system and empirically leads to the drift and artifacts seen in coupled baselines.

\noindent\textbf{Is NaviEdit merely tuning?  Frontier shift rather than a single best point.}
Appendix~\ref{app:step-scaling} (Figure~\ref{fig:pareto}) shows a \emph{family-level} shift: at comparable CLIP-Whole, decoupled schedules attain higher background fidelity than coupled ones, suggesting a cost floor induced by forced exposure to risky regimes under coupling.
This is exactly the empirical signature predicted by a rollout-level view: under a fixed step budget, performance is governed by where progress mass is spent along scale, not by any single timestep choice.

\noindent\textbf{Across flow architectures.}
The Navi controller assumes only a conditional velocity field and a monotone scale path. It does not depend on a particular backbone. 
Table~\ref{tab:model_ab} provides a controlled cross-backbone ablation for the same editing mechanism across SD3, SD3.5, and FLUX.1 [dev]. 
In all three cases, decoupling improves SSIM/PSNR over the coupled allocation and keeps CLIP-Whole/Edited comparable or slightly higher. 
The edit-scale mismatch is therefore not an idiosyncratic artifact of one model family.

\noindent\textbf{Why not just increase CFG?}
Increasing CFG strengthens conditional alignment, but it does not address the progress--scale coupling: it changes the \emph{magnitude} of the driving field, not the \emph{allocation} of compute along scale under a fixed step budget~\cite{ho2022classifier}.
Figure~\ref{fig:cfg_ab} illustrates the practical implication in editing: simply raising CFG does not reliably reproduce the decoupled improvement, while coupled schedules may still drift/regenerate; decoupling achieves the semantic change with substantially better anchoring at comparable guidance.

\section{Limitations}
NaviEdit improves \emph{how} computation is allocated along the editing rollout, but it does not by itself solve support estimation or scene reasoning. When the editable support is too conservative, or when a fixed effective window is too preservation-biased for a difficult replacement, the result can remain semantically intermediate; this is most visible in fine-grained replacements and can be accentuated by gated variants. NaviEdit does not impose explicit geometric or relational consistency, so scenes involving mirrors, reflections, repeated objects, or strong inter-object relations may remain globally inconsistent even when local drift is suppressed. More broadly, the controller is most effective when the editability-preservation tradeoff is still strongly governed by allocation; for strong trained image editors that already learn part of this tradeoff end-to-end, the room for additional gains can be smaller. Finally, the current controller is defined for compatible prompt-pair editors that expose a conditional differential field and a monotone scale path; extending the same control principle to less aligned editing interfaces remains future work.

\section{Conclusion}
We revisited training-free, inversion-free real-image editing with drift-based generative editors through a single mismatch: existing pipelines implicitly use the model scale coordinate as both a granularity coordinate and an edit-progress clock.
By separating progress from scale and enforcing a strict self-consistency contract, NaviEdit turns editing into a rollout-level vector-field navigation controller under a fixed step budget.
This makes NaviEdit a plug-and-play inference-time controller for compatible drift-based editors: it does not require training, inversion, masks, or modifying the underlying generative model.
The resulting allocation principle is simple: spend compute densely within an effective scale window rather than expanding into high-noise regimes.
Empirically, the FlowEdit instantiation \emph{Navi-FlowEdit + gate} substantially improves over its coupled counterpart, while ungated transfer results with \emph{Navi-InfEdit ($M\equiv 1$)} and \emph{Navi-FlowAlign ($M\equiv 1$)} show positive average gains beyond the original FlowEdit setting.
These results suggest that the key contribution is not a specific heuristic, but a general progress-scale decoupling principle for training-free drift-based editing.

\section*{Impact Statement}
This paper presents work whose goal is to advance the field of Machine Learning, specifically in controllable image editing. 
The proposed method may support creative and assistive applications by improving editing controllability and reducing the need for task-specific training. 
As with other image editing and generative modeling techniques, the method should be used responsibly, since edited images may be misinterpreted or misused in inappropriate contexts. 
We discuss broader societal considerations in Appendix~\ref{sec:societal_impacts}.


\bibliography{main}

@inproceedings{hertz2023prompt,
  title={Prompt-to-Prompt Image Editing with Cross-Attention Control},
  author={Hertz, Amir and Mokady, Ron and Tenenbaum, Jay and Aberman, Kfir and Pritch, Yael and Cohen-Or, Daniel},
  booktitle={ICLR},
  year={2023}
}

@inproceedings{chen2024tino,
  title={Tino-edit: Timestep and noise optimization for robust diffusion-based image editing},
  author={Chen, Sherry X and Vaxman, Yaron and Ben Baruch, Elad and Asulin, David and Moreshet, Aviad and Lien, Kuo-Chin and Sra, Misha and Sen, Pradeep},
  booktitle={Proceedings of the IEEE/CVF Conference on Computer Vision and Pattern Recognition},
  pages={6337--6346},
  year={2024}
}

@article{lin2024schedule,
  title={Schedule your edit: A simple yet effective diffusion noise schedule for image editing},
  author={Lin, Haonan and Chen, Yan and Wang, Jiahao and An, Wenbin and Wang, Mengmeng and Tian, Feng and Liu, Yong and Dai, Guang and Wang, Jingdong and Wang, Qianying},
  journal={Advances in Neural Information Processing Systems},
  volume={37},
  pages={115712--115756},
  year={2024}
}

@inproceedings{kulikov2025flowedit,
  title={Flowedit: Inversion-free text-based editing using pre-trained flow models},
  author={Kulikov, Vladimir and Kleiner, Matan and Huberman-Spiegelglas, Inbar and Michaeli, Tomer},
  booktitle={Proceedings of the IEEE/CVF International Conference on Computer Vision},
  pages={19721--19730},
  year={2025}
}

@article{xu2023inversion,
  title={Inversion-free image editing with natural language},
  author={Xu, Sihan and Huang, Yidong and Pan, Jiayi and Ma, Ziqiao and Chai, Joyce},
  journal={arXiv preprint arXiv:2312.04965},
  year={2023}
}

@inproceedings{lipman2023flow,
  title={Flow Matching for Generative Modeling},
  author={Lipman, Yaron and Chen, Ricky TQ and Ben-Hamu, Heli and Nickel, Maximilian and Le, Matt},
  booktitle={11th International Conference on Learning Representations, ICLR 2023},
  year={2023}
}

@article{liu2022flow,
  title={Flow straight and fast: Learning to generate and transfer data with rectified flow},
  author={Liu, Xingchao and Gong, Chengyue and Liu, Qiang},
  journal={arXiv preprint arXiv:2209.03003},
  year={2022}
}

@inproceedings{cao2023masactrl,
  title={Masactrl: Tuning-free mutual self-attention control for consistent image synthesis and editing},
  author={Cao, Mingdeng and Wang, Xintao and Qi, Zhongang and Shan, Ying and Qie, Xiaohu and Zheng, Yinqiang},
  booktitle={Proceedings of the IEEE/CVF international conference on computer vision},
  pages={22560--22570},
  year={2023}
}

@article{song2020denoising,
  title={Denoising diffusion implicit models},
  author={Song, Jiaming and Meng, Chenlin and Ermon, Stefano},
  journal={arXiv preprint arXiv:2010.02502},
  year={2020}
}

@article{ju2023direct,
  title={Direct inversion: Boosting diffusion-based editing with 3 lines of code},
  author={Ju, Xuan and Zeng, Ailing and Bian, Yuxuan and Liu, Shaoteng and Xu, Qiang},
  journal={arXiv preprint arXiv:2310.01506},
  year={2023}
}

@inproceedings{deutch2024turboedit,
  title={Turboedit: Text-based image editing using few-step diffusion models},
  author={Deutch, Gilad and Gal, Rinon and Garibi, Daniel and Patashnik, Or and Cohen-Or, Daniel},
  booktitle={SIGGRAPH Asia 2024 Conference Papers},
  pages={1--12},
  year={2024}
}

@inproceedings{huang2025dual,
  title={Dual-Schedule Inversion: Training-and Tuning-Free Inversion for Real Image Editing},
  author={Huang, Jiancheng and Huang, Yi and Liu, Jianzhuang and Zhou, Donghao and Liu, Yifan and Chen, Shifeng},
  booktitle={2025 IEEE/CVF Winter Conference on Applications of Computer Vision (WACV)},
  pages={660--669},
  year={2025},
  organization={IEEE}
}

@inproceedings{peebles2023scalable,
  title={Scalable diffusion models with transformers},
  author={Peebles, William and Xie, Saining},
  booktitle={Proceedings of the IEEE/CVF international conference on computer vision},
  pages={4195--4205},
  year={2023}
}

@inproceedings{esser2024scaling,
  title={Scaling rectified flow transformers for high-resolution image synthesis},
  author={Esser, Patrick and Kulal, Sumith and Blattmann, Andreas and Entezari, Rahim and M{\"u}ller, Jonas and Saini, Harry and Levi, Yam and Lorenz, Dominik and Sauer, Axel and Boesel, Frederic and others},
  booktitle={Forty-first international conference on machine learning},
  year={2024}
}

@article{ho2022classifier,
  title={Classifier-free diffusion guidance},
  author={Ho, Jonathan and Salimans, Tim},
  journal={arXiv preprint arXiv:2207.12598},
  year={2022}
}

@article{meng2021sdedit,
  title={Sdedit: Guided image synthesis and editing with stochastic differential equations},
  author={Meng, Chenlin and He, Yutong and Song, Yang and Song, Jiaming and Wu, Jiajun and Zhu, Jun-Yan and Ermon, Stefano},
  journal={arXiv preprint arXiv:2108.01073},
  year={2021}
}

@inproceedings{tumanyan2023pnp,
  title={Plug-and-play diffusion features for text-driven image-to-image translation},
  author={Tumanyan, Narek and Geyer, Michal and Bagon, Shai and Dekel, Tali},
  booktitle={Proceedings of the IEEE/CVF conference on computer vision and pattern recognition},
  pages={1921--1930},
  year={2023}
}

@article{couairon2023diffedit,
  title={Diffedit: Diffusion-based semantic image editing with mask guidance},
  author={Couairon, Guillaume and Verbeek, Jakob and Schwenk, Holger and Cord, Matthieu},
  journal={arXiv preprint arXiv:2210.11427},
  year={2022}
}

@inproceedings{lugmayr2022repaint,
  title={Repaint: Inpainting using denoising diffusion probabilistic models},
  author={Lugmayr, Andreas and Danelljan, Martin and Romero, Andres and Yu, Fisher and Timofte, Radu and Van Gool, Luc},
  booktitle={Proceedings of the IEEE/CVF conference on computer vision and pattern recognition},
  pages={11461--11471},
  year={2022}
}

@article{mao2025unifyedit,
  title={Tuning-Free Image Editing with Fidelity and Editability via Unified Latent Diffusion Model},
  author={Mao, Qi and Chen, Lan and Gu, Yuchao and Shou, Mike Zheng and Yang, Ming-Hsuan},
  journal={arXiv preprint arXiv:2504.05594},
  year={2025}
}

@inproceedings{avrahami2025stableflow,
  title={Stable flow: Vital layers for training-free image editing},
  author={Avrahami, Omri and Patashnik, Or and Fried, Ohad and Nemchinov, Egor and Aberman, Kfir and Lischinski, Dani and Cohen-Or, Daniel},
  booktitle={Proceedings of the Computer Vision and Pattern Recognition Conference},
  pages={7877--7888},
  year={2025}
}

@article{wang2025rfsolver,
  title={Taming rectified flow for inversion and editing},
  author={Wang, Jiangshan and Pu, Junfu and Qi, Zhongang and Guo, Jiayi and Ma, Yue and Huang, Nisha and Chen, Yuxin and Li, Xiu and Shan, Ying},
  journal={arXiv preprint arXiv:2411.04746},
  year={2024}
}

@article{hu2025ieap,
  title={Image Editing As Programs with Diffusion Models},
  author={Hu, Yujia and Liu, Songhua and Tan, Zhenxiong and Yang, Xingyi and Wang, Xinchao},
  journal={arXiv preprint arXiv:2506.04158},
  year={2025}
}

@article{kouzelis2024localdiff,
  title={Enabling Local Editing in Diffusion Models by Joint and Individual Component Analysis},
  author={Kouzelis, Theodoros and Plitsis, Manos and Nicolaou, Mihalis A and Panagakis, Yannis},
  journal={arXiv preprint arXiv:2408.16845},
  year={2024}
}

@article{liu2025hybrideditdif,
  title={Hybrideditdif: Text and exemplar guided image editing with diffusion models},
  author={Liu, Qi and Fu, Xuemei and Zhang, Huang and Cheng, Long and Han, Jungong and Moreira, Catarina and Ning, Xin and Bai, Xiao},
  journal={Pattern Recognition},
  pages={112510},
  year={2025},
  publisher={Elsevier}
}

@article{dift2023,
  title={Emergent correspondence from image diffusion},
  author={Tang, Luming and Jia, Menglin and Wang, Qianqian and Phoo, Cheng Perng and Hariharan, Bharath},
  journal={Advances in Neural Information Processing Systems},
  volume={36},
  pages={1363--1389},
  year={2023}
}

@article{luo2023hyperfeatures,
  title={Diffusion hyperfeatures: Searching through time and space for semantic correspondence},
  author={Luo, Grace and Dunlap, Lisa and Park, Dong Huk and Holynski, Aleksander and Darrell, Trevor},
  journal={Advances in Neural Information Processing Systems},
  volume={36},
  pages={47500--47510},
  year={2023}
}

@inproceedings{tian2024diffseg,
  title={Diffuse attend and segment: Unsupervised zero-shot segmentation using stable diffusion},
  author={Tian, Junjiao and Aggarwal, Lavisha and Colaco, Andrea and Kira, Zsolt and Gonzalez-Franco, Mar},
  booktitle={Proceedings of the IEEE/CVF Conference on Computer Vision and Pattern Recognition},
  pages={3554--3563},
  year={2024}
}

@article{yang2023maskdiffusion,
  title={Maskdiffusion: Boosting text-to-image consistency with conditional mask},
  author={Zhou, Yupeng and Zhou, Daquan and Wang, Yaxing and Feng, Jiashi and Hou, Qibin},
  journal={International Journal of Computer Vision},
  volume={133},
  number={5},
  pages={2805--2824},
  year={2025},
  publisher={Springer}
}

@inproceedings{li2024zone,
  title={Zone: Zero-shot instruction-guided local editing},
  author={Li, Shanglin and Zeng, Bohan and Feng, Yutang and Gao, Sicheng and Liu, Xiuhui and Liu, Jiaming and Li, Lin and Tang, Xu and Hu, Yao and Liu, Jianzhuang and others},
  booktitle={Proceedings of the IEEE/CVF Conference on Computer Vision and Pattern Recognition},
  pages={6254--6263},
  year={2024}
}

@misc{i2sb2023,
      title={I$^2$SB: Image-to-Image Schr\"odinger Bridge}, 
      author={Guan-Horng Liu and Arash Vahdat and De-An Huang and Evangelos A. Theodorou and Weili Nie and Anima Anandkumar},
      year={2023},
      eprint={2302.05872},
      archivePrefix={arXiv},
      primaryClass={cs.CV},
      url={https://arxiv.org/abs/2302.05872}, 
}

@article{implicit_i2i_sb_2024,
  title={Implicit Image-to-Image Schr{\"o}dinger Bridge for image restoration},
  author={Wang, Yuang and Yoon, Siyeop and Jin, Pengfei and Tivnan, Matthew and Song, Sifan and Chen, Zhennong and Hu, Rui and Zhang, Li and Li, Quanzheng and Chen, Zhiqiang and others},
  journal={Pattern Recognition},
  volume={165},
  pages={111627},
  year={2025},
  publisher={Elsevier}
}

@article{cai2025z,
  title={Z-Image: An Efficient Image Generation Foundation Model with Single-Stream Diffusion Transformer},
  author={Cai, Huanqia and Cao, Sihan and Du, Ruoyi and Gao, Peng and Hoi, Steven and Hou, Zhaohui and Huang, Shijie and Jiang, Dengyang and Jin, Xin and Li, Liangchen and others},
  journal={arXiv preprint arXiv:2511.22699},
  year={2025}
}

@misc{flux2024,
    author={Black Forest Labs},
    title={FLUX},
    year={2024},
    howpublished={\url{https://github.com/black-forest-labs/flux}},
}

@article{kim2026flowalign,
  title={FlowAlign: Trajectory-Regularized, Inversion-Free Flow-based Image Editing},
  author={Kim, Jeongsol and Hong, Yeobin and Park, Jonghyun and Ye, Jong Chul},
  journal={arXiv preprint arXiv:2505.23145},
  year={2025}
}

@article{ye2025imgedit,
  title={ImgEdit: A Unified Image Editing Dataset and Benchmark},
  author={Ye, Yang and He, Xianyi and Li, Zongjian and Lin, Bin and Yuan, Shenghai and Yan, Zhiyuan and Hou, Bohan and Yuan, Li},
  journal={arXiv preprint arXiv:2505.20275},
  year={2025}
}

@article{bai2025qwen25vl,
  title={Qwen2.5-VL Technical Report},
  author={Bai, Shuai and Chen, Keqin and Liu, Xuejing and Wang, Jialin and Ge, Wenbin and Song, Sibo and Dang, Kai and Wang, Peng and Wang, Shijie and Tang, Jun and Zhong, Humen and Zhu, Yuanzhi and Yang, Mingkun and Li, Zhaohai and Wan, Jianqiang and Wang, Pengfei and Ding, Wei and Fu, Zheren and Xu, Yiheng and Ye, Jiabo and Zhang, Xi and Xie, Tianbao and Cheng, Zesen and Zhang, Hang and Yang, Zhibo and Xu, Haiyang and Lin, Junyang and others},
  journal={arXiv preprint arXiv:2502.13923},
  year={2025}
}

@article{liang2026render,
  title={Render-in-the-Loop: Vector Graphics Generation via Visual Self-Feedback},
  author={Liang, Guotao and Wang, Zhangcheng and Hu, Juncheng and Zhou, Haitao and Xue, Ziteng and Zhang, Jing and Xu, Dong and Yu, Qian},
  journal={arXiv preprint arXiv:2604.20730},
  year={2026}
}

@article{liang2026vanim,
  title={VAnim: Rendering-Aware Sparse State Modeling for Structure-Preserving Vector Animation},
  author={Liang, Guotao and Wang, Zhangcheng and Wang, Chuang and Hu, Juncheng and Zhou, Haitao and Liu, Junhua and Zhang, Jing and Xu, Dong and Yu, Qian},
  journal={arXiv preprint arXiv:2605.01517},
  year={2026}
}

@article{chen2026learning,
  title={Learning Structure-Semantic Evolution Trajectories for Graph Domain Adaptation},
  author={Chen, Wei and Guo, Xingyu and Li, Shuang and Zhong, Yan and Zhang, Zhao and Zhuang, Fuzhen and Liu, Hongrui and Zhang, Libang and Ye, Guo and He, Huimei},
  journal={arXiv preprint arXiv:2602.10506},
  year={2026}
}

@inproceedings{yuan2025hyperbolic,
  title={Hyperbolic diffusion recommender model},
  author={Yuan, Meng and Xiao, Yutian and Chen, Wei and Zhao, Chou and Wang, Deqing and Zhuang, Fuzhen},
  booktitle={Proceedings of the ACM on Web Conference 2025},
  pages={1992--2006},
  year={2025}
}

@article{yang2025automat,
 title={Automat: Enabling automated crystal structure reconstruction from microscopy via agentic tool use},
 author={Yang, Yaotian and Tang, Yiwen and Chen, Yizhe and Chen, Xiao and Qiu, Jiangjie and Xiong, Hao and Yin, Haoyu and Luo, Zhiyao and Zhang, Yifei and Tao, Sijia and others},
 journal={arXiv preprint arXiv:2505.12650},
 year={2025}
}

@inproceedings{
li2025miv,
title={M{\texttwosuperior}{IV}: Towards Efficient and Fine-grained Multimodal In-Context Learning via Representation Engineering},
author={Yanshu Li and Yi Cao and Hongyang He and Qisen Cheng and Xiang Fu and Xi Xiao and Tianyang Wang and Ruixiang Tang},
booktitle={Second Conference on Language Modeling},
year={2025},
url={https://openreview.net/forum?id=9ffYcEiNw9}
}

@inproceedings{li2026make,
  title={Make LVLMs Focus: Context-Aware Attention Modulation for Better Multimodal In-Context Learning},
  author={Li, Yanshu and Yang, Jianjiang and Yang, Ziteng and Li, Bozheng and Han, Ligong and He, Hongyang and Yao, Zhengtao and Chen, Yingjie Victor and Fei, Songlin and Liu, Dongfang and others},
  booktitle={Proceedings of the AAAI Conference on Artificial Intelligence},
  volume={40},
  number={8},
  pages={6610--6618},
  year={2026}
}

@article{chen2026grace,
  title={Gated Relational Alignment via Confidence-based Distillation for Efficient VLMs},
  author={Chen, Yanlong and Habibian, Amirhossein and Benini, Luca and Li, Yawei},
  journal={arXiv preprint arXiv:2601.22709},
  year={2026}
}

@article{FineCIR,
  title={Finecir: Explicit parsing of fine-grained modification semantics for composed image retrieval},
  author={Li, Zixu and Fu, Zhiheng and Hu, Yupeng and Chen, Zhiwei and Wen, Haokun and Nie, Liqiang},
  journal={arXiv preprint arXiv:2503.21309},
  year={2025}
}
\bibliographystyle{icml2026}

\newpage
\appendix
\onecolumn



\section{Full Related Work}
\label{app:related_work}

Training-free image editing with diffusion or flow priors is often framed as intervening on a pretrained generative trajectory while preserving a user-provided source image. A recurring practical pattern is to use the model’s native coordinate (diffusion timestep, noise level, or flow time) both as a semantic-scale knob and as an implicit notion of edit progress, by deciding where to inject structure constraints and where to apply semantic change. Our work is aligned with this line in that we remain training-free and operate by manipulating model-implied vector fields, but we differ in the organizing principle: we treat the model coordinate as a scale (granularity) coordinate and introduce an explicit edit-progress axis that is independent of scale allocation.

SDEdit injects noise into a source image and then runs the reverse stochastic process to obtain a realistic sample that remains faithful to the corrupted source, using the noise magnitude as the main knob to trade off faithfulness versus realism \cite{meng2021sdedit}. RePaint adapts reverse diffusion iterations for inpainting by repeatedly resampling unmasked regions from the source while denoising the masked region, again using the diffusion trajectory and its step allocation as the operational model \cite{lugmayr2022repaint}. These works highlight that trajectory intervention and step allocation can be sufficient for strong editing behavior without additional training, but they also exemplify that the trajectory coordinate simultaneously mediates both which information is preserved and how much semantic change is allowed.

A complementary family of training-free editors manipulates internal representations, especially attention and intermediate features, to preserve structure while applying text-driven changes. Prompt-to-Prompt observes that cross-attention mediates the association between words and spatial layout, and edits are realized by controlling cross-attention maps along the generation process; it also introduces localized control mechanisms such as LocalBlend based on attention maps \cite{hertz2023prompt}. Plug-and-Play Diffusion Features injects spatial features and self-attention information extracted from a source image into the target generation to retain layout and structure \cite{tumanyan2023pnp}. DiffEdit constructs an edit mask by contrasting diffusion predictions under different prompts and then performs masked semantic editing \cite{couairon2023diffedit}. In instruction-guided editing, ZONE leverages localization cues in instruction-conditioned diffusion models to enable zero-shot local edits \cite{li2024zone}. These methods provide strong empirical evidence that internal attention/feature channels encode controllable localization signals and that editing can be staged by choosing where in the model’s trajectory these signals are injected.

Several recent works explicitly target the fidelity–editability tradeoff by adding constraints or schedulers that decide at which timesteps to emphasize preservation versus semantic change. UnifyEdit formulates training-free editing as diffusion latent optimization with a self-attention preservation constraint and a cross-attention alignment constraint, and proposes an adaptive timestep scheduler to balance the two in a unified mechanism \cite{mao2025unifyedit}. Our focus is orthogonal: rather than treating timestep selection as the only axis for balancing objectives, we explicitly separate edit progress from the scale coordinate and study budget allocation along the scale axis as a controllability problem under a fixed computational budget.

Flow-based and rectified-flow models have recently become central to high-capacity text-to-image systems, and a number of training-free editors are being rederived for this setting. Stable Flow studies training-free editing on DiT-style flow-matching models and identifies vital layers for selective attention injection, addressing the practical question of where to intervene in transformer-based flow models \cite{avrahami2025stableflow}. FlowEdit proposes an inversion-free and optimization-free approach that constructs an ODE mapping between the source- and target-prompt distributions for pretrained flow models, motivated by reduced transport cost relative to edit-by-inversion \cite{kulikov2025flowedit}. RF-Solver and RF-Edit improve numerical solution accuracy of the rectified-flow ODE and build inversion/editing pipelines atop the improved solver \cite{wang2025rfsolver}. These works are closely related in spirit in that they treat editing as trajectory manipulation for rectified flows; our contribution is a different abstraction layer, emphasizing explicit separation between an edit-progress parameter and the model’s scale coordinate, and making scale-density allocation an object of analysis rather than a byproduct of step count and default schedules.

Beyond trajectory-level interventions, some approaches decompose either the edit itself or the latent degrees of freedom into more structured components. Adjacent to image editing, composed image retrieval also studies how textual modification instructions interact with visual content, including explicit parsing of fine-grained modification semantics~\cite{FineCIR}. IEAP decomposes instruction-driven editing into a sequence of atomic operations implemented via lightweight adapters and orchestrated as an editing program, enabling compositional multi-step edits under a common DiT model \cite{hu2025ieap}. This addresses a different axis of complexity—semantic compositionality over operations—while our method focuses on how to allocate a fixed computational budget across scales for a given operation. Kouzelis et al.\ propose unsupervised local manipulation by analyzing Jacobians of a pretrained diffusion model to identify local latent directions via joint/individual component analysis \cite{kouzelis2024localdiff}. HybridEditDif introduces a dynamic decoupled cross-attention mechanism to combine text and exemplar conditioning for guided editing, targeting consistency and controllability via architectural conditioning design \cite{liu2025hybrideditdif}. These works suggest that disentangling and localizing degrees of freedom—via program structure, latent directions, or decoupled conditioning—can substantially improve controllability, and they motivate our emphasis on separating the control over scale (granularity) from the notion of edit progress.

Our appendix also relies on the broader literature showing that diffusion and flow models contain rich internal representations that support localization and diagnostic signals. DIFT extracts diffusion features as correspondence descriptors without task-specific training, showing that diffusion networks encode semantically meaningful spatial structure \cite{dift2023}. Diffusion Hyperfeatures aggregate multi-timestep and multi-layer diffusion features into per-pixel descriptors for semantic correspondence, formalizing that information is distributed across both space and the diffusion coordinate \cite{luo2023hyperfeatures}. DiffSeg segments images in a training-free, zero-shot manner by merging self-attention maps in Stable Diffusion \cite{tian2024diffseg}. MaskDiffusion revisits cross-attention and proposes training-free masking to improve text-image consistency, explicitly linking attention-map pathologies to semantic mismatch \cite{yang2023maskdiffusion}. In our framework, attention-derived masks and pressure-like signals are treated as diagnostics for identifying effective scale windows and feasible edit regions, rather than as additional learnable modules.

Finally, diffusion bridges and Schrödinger-bridge formulations provide a conceptual lens where conditional generation is viewed as transporting between endpoint distributions. I2SB introduces an image-to-image Schrödinger bridge formulation for learning diffusion processes between two distributions \cite{i2sb2023}, and subsequent works develop implicit or restoration-oriented SB formulations \cite{implicit_i2i_sb_2024}. While these methods are typically training-based and focus on learning the bridge dynamics, they support the general viewpoint that editing can be framed as controlled transport; our method operationalizes a training-free version of this view by directly manipulating pretrained flow vector fields with explicit separation between scale control and edit progress.

Although these adjacent directions are outside our direct experimental scope, they reinforce the broader methodological pattern that controllable generation benefits from explicit intermediate structure rather than end-to-end output correction alone.
In vector graphics and animation, rendering-aware self-feedback and sparse state modeling similarly aim to preserve geometric or temporal structure while changing high-level content~\cite{liang2026render,liang2026vanim}.
In other structured generative or adaptation settings, structure--semantic evolution trajectories and geometry-aware diffusion recommenders show that controllability often depends on matching the representation geometry to the task domain~\cite{chen2026learning,yuan2025hyperbolic}.
Related motivations also appear in scientific reconstruction and multimodal reasoning, where agentic tool use, representation engineering, context-aware attention modulation, and confidence-based relational alignment constrain outputs through intermediate signals rather than only through final-sample supervision~\cite{yang2025automat,li2025miv,li2026make,chen2026grace}.

\section{Symbols Table}

\begin{longtable}[t]{@{}p{0.28\textwidth} p{0.67\textwidth}@{}}
\caption{Symbols Table} \label{tab:symbol_glossary} \\
\toprule
\textbf{Symbol} & \textbf{Description} \\
\midrule
\endfirsthead

\multicolumn{2}{c}%
{{\tablename\ \thetable{} -- continued from previous page}} \\
\toprule
\textbf{Symbol} & \textbf{Description} \\
\midrule
\endhead

\bottomrule
\multicolumn{2}{r}{{Continued on next page}} \\
\endfoot

\endlastfoot

\multicolumn{2}{@{}l}{\textit{Axes, schedules, and budgets}}\\
$K$ & Step budget; for NaviEdit this equals the number of model evaluations.\\
$N$ & Number of scheduler timesteps.\\
$k$ & Edit-step index ($k=0,\dots,K-1$).\\
$n$ & Scheduler index ($n=0,\dots,N-1$).\\
$s\in[0,1]$ & Explicit progress axis (ODE time).\\
$u\in[0,1]$ & Normalized scale / noise coordinate (granularity axis).\\
$t$ & Model timestep coordinate.\\
$\tau(u)$ & Timestep map (used as $\tau(u)=uT$).\\
$T$ & Scheduler training-time timestep scale.\\
$\{t^{(n)}\}_{n=0}^{N-1}$ & Scheduler timesteps at inference.\\
$u^{(n)}$ & Normalized scale, $u^{(n)}=t^{(n)}/T$.\\
$(t_{\mathrm{path}}[n],u_{\mathrm{path}}[n])$ & Monotone scheduler path (noisy$\rightarrow$clean).\\
$p\in[0,1]$ & Navigation coordinate along the scheduler path (decoupled from $u$).\\
$p_k$ & Navigation coordinate at step $k$.\\
$\Delta p_k$ & Increment of $p$ at step $k$.\\
$\Delta p_{\min}$ & Base remaining-progress increment, $(1-p_k)/(K-k)$.\\
$\epsilon_p$ & Small endpoint clamp used before the terminal step.\\
$\mathrm{Interp}(\cdot\,;p)$ & Linear interpolation from $p$ to $(t,u)$.\\
$\mathrm{Speed}(\mathrm{risk})$ & Online speed controller (risk$\rightarrow$step density).\\
$\mathrm{risk}$ & Online navigation risk signal (from existing diagnostics).\\
$t_{\mathrm{ref}}$ & Reference depth for the tail window.\\
$n_0$ & Tail start index, $n_0=\max(0,N-t_{\mathrm{ref}})$.\\

\addlinespace
\multicolumn{2}{@{}l}{\textit{Latents, anchors, conditions}}\\
$x_{\mathrm{src}}$ & Source latent.\\
$x$ & Current latent state.\\
$x_k$ & Latent at step $k$.\\
$x_K$ & Final edited latent.\\
$x(s)$ & Continuous latent trajectory.\\
$c_{\mathrm{src}}$ & Source condition / prompt.\\
$c_{\mathrm{tar}}$ & Target condition / prompt.\\
$\epsilon\sim\mathcal N(0,I)$ & Gaussian noise anchor.\\
$\epsilon_k$ & Fresh noise at step $k$.\\
$I$ & Identity matrix.\\
$z^{\mathrm{src}}(x_{\mathrm{src}};u,\epsilon)$ & Co-located source anchor, $(1-u)x_{\mathrm{src}}+u\epsilon$.\\
$z^{\mathrm{tar}}(x;x_{\mathrm{src}};u,\epsilon)$ & Target anchor via residual shift.\\
$z_k^{\mathrm{src}}$ & Step-$k$ source anchor.\\
$z_k^{\mathrm{tar}}$ & Step-$k$ target anchor.\\

\addlinespace
\multicolumn{2}{@{}l}{\textit{model field and differential fields}}\\
$v_\theta(\cdot)$ & model velocity field.\\
$\theta$ & model parameters.\\
$V_k^{\mathrm{src}}$ & Source velocity at step $k$.\\
$V_k^{\mathrm{tar}}$ & Target velocity at step $k$.\\
$\Delta V(u)$ & Local differential field at scale $u$.\\
$\Delta V_k$ & Step-$k$ differential velocity, $V_k^{\mathrm{tar}}-V_k^{\mathrm{src}}$.\\
$M(u)$ & Feasible-region mask at scale $u$.\\
$M_k$ & Step-$k$ mask ($\in[0,1]^{H\times W\times C}$).\\
$\Pi_{M(u)}(\cdot)$ & Active-set projection induced by $M(u)$.\\
$\Delta V_{\mathrm{eff}}(x;u,\epsilon)$ & Effective field (projected differential field).\\
$\Delta V_k^{\mathrm{eff}}$ & Masked field, $\Delta V_k^{\mathrm{eff}}:=M_k\odot\Delta V_k$.\\
$\odot$ & Elementwise (Hadamard) product.\\

\addlinespace
\multicolumn{2}{@{}l}{\textit{Discretization / self-consistency}}\\
$u_k$ & Scale used at step $k$ (mixing/query/update share it).\\
$t_k$ & Timestep queried at step $k$ (from interpolation / $\tau(u_k)$).\\
$\Delta u_k$ & Scale increment, $\Delta u_k=u_{k+1}-u_k$.\\
$\Delta s_k$ & Progress weight from scale steps.\\
$\tilde u_k$ & Mismatched query scale (used in contract violation).\\
$\widetilde{\Delta V}_{\mathrm{eff}}(x;u,\tilde u,\epsilon)$ & Mismatch-field (anchors at $u$, query at $\tilde u$).\\
$b_k$ & Bias term induced by mismatch.\\
$x_{k+1}^{\mathrm{sc}}$ & Self-consistent Euler update result.\\

\addlinespace
\multicolumn{2}{@{}l}{\textit{Diagnostics}}\\
$\rho(u)$ & Leakage pressure (outside/total energy ratio).\\
$\rho_k$ & Discrete leakage pressure at step $k$.\\
$\omega(u)$ & Scale-wise directional oscillation.\\
$\omega_k$ & Step-to-step oscillation on $\Delta V_k^{\mathrm{eff}}$.\\
$\delta$ & Small scale offset (for oscillation).\\
$\cos(a,b)$ & Cosine similarity.\\
$\langle a,b\rangle$ & Inner product.\\
$\|\cdot\|_2$ & $\ell_2$ norm.\\
$\varepsilon$ & Numerical stabilizer (e.g., $10^{-8}$).\\

\addlinespace
\multicolumn{2}{@{}l}{\textit{Rollout functional and proxy}}\\
$\phi(x,u)$ & Local nonnegative risk density.\\
$\phi_k$ & Step risk density (proxy instantiation).\\
$\lambda_\rho,\lambda_\omega,\lambda_h$ & Weights in $\phi_k$.\\
$\mathcal G[x(\cdot),u(\cdot)]$ & Rollout functional, $\int_0^1\phi(\cdot)\,ds$.\\
$\widehat{\mathcal G}$ & Discrete rollout proxy, $\sum_k \Delta s_k\,\phi_k$.\\
$m_{\mathrm{bad}}$ & Outside-window progress mass.\\
$\mathbf 1\{\cdot\}$ & Indicator function.\\

\addlinespace
\multicolumn{2}{@{}l}{\textit{Scale regimes and schedule families}}\\
$\mathcal U_{\mathrm{eff}}$ & Effective scale window (tail window).\\
$\mathcal U_{\mathrm{good}}$ & Low-risk scale interval.\\
$\mathcal U_{\mathrm{bad}}$ & High-risk scale interval.\\
$\mathcal U^\star$ & Good sub-window used in theory.\\
$\mathcal F_{\mathrm{couple}}(K)$ & Coupled budget-to-range schedule family.\\
$\mathcal F_{\mathrm{decouple}}(K)$ & Decoupled (density-refinement) schedule family.\\
$T_{\mathrm{bad}}$ & Preimage set of bad scales (in proofs).\\
$t_{\mathrm{bad}}$ & Bad cutpoint (in proofs).\\
$t_0(K)$ & Coupled suffix start parameter (budget-dependent).\\
$\delta(K)$ & Lower bound on forced bad mass (proof bound).\\
$c_{\mathrm{good}},c_{\mathrm{bad}}$ & Risk bounds on good/bad regimes.\\

\addlinespace
\multicolumn{2}{@{}l}{\textit{ODE / numerical analysis (appendix)}}\\
$f(s,x)$ & ODE vector field (proof notation).\\
$g(s)$ & Scale-to-progress factor (proof notation).\\
$s_k$ & Grid point on $s$ for Euler discretization.\\
$h_s$ & Euler step size on $s$ (typically $1/K$).\\
$h_u$ & Max scale step size (density measure).\\
$x^\star$ & Reference/ideal trajectory (proof notation).\\
$u^\star$ & Window-fixed control (proof notation).\\
$x^{(K)}_k$ & Euler discrete iterate (proof notation).\\
$L$ & Lipschitz constant (for $f$).\\
$B$ & Bound on $\|f\|$ (proof).\\
$L_\phi$ & Lipschitz constant (for $\phi$).\\
$C_{\mathrm{E}}$ & Euler global error constant.\\
$W_u$ & scale-span width.\\

\addlinespace
\multicolumn{2}{@{}l}{\textit{Evaluation metrics (tables/figures)}}\\
$\mathrm{Dist}$ & Structure distance metric.\\
$\mathrm{PSNR}$ & Peak signal-to-noise ratio.\\
$\mathrm{MSE}$ & Mean squared error.\\
$\mathrm{SSIM}$ & Structural similarity index.\\
$\mathrm{LPIPS}$ & Perceptual distance (LPIPS).\\
$\mathrm{PSNR}_{\mathrm{bg}}$ & Background PSNR (background fidelity).\\
$\mathrm{CLIP}$ (Whole/Edited) & CLIP semantic scores (whole image / edited region).\\
\midrule
\end{longtable}

\section{Instantiation of $\phi$ and $\widehat{\mathcal G}$}
\label{app:phi}

This section specifies the concrete rollout proxy used in Figure~\ref{fig:hinge_pareto_micro} and Figure~\ref{fig:hinge_density_micro}.
The design goal is purely operational: $\widehat{\mathcal G}$ should be computable from signals already produced during editing (no extra model calls), and should reflect the two weak premises used in the main text:
$\phi$ increases under spatial underdetermination (leakage), and the discretization component of the rollout decreases when density is refined inside a fixed effective window.

\subsection{Per-step signals available during editing}

Fix a run with $K$ model evaluations and a monotone scale sequence $\{u_k\}_{k=0}^{K}$.
Each step constructs the co-located pair
$
z^{\mathrm{src}}_k=(1-u_k)x_{\mathrm{src}}+u_k\epsilon_k,\;
z^{\mathrm{tar}}_k=x_k+(z^{\mathrm{src}}_k-x_{\mathrm{src}}),
$
queries the model once (source and target in a batch), and forms the differential field
\[
\Delta V_k
=
v_\theta(z^{\mathrm{tar}}_k,\tau(u_k),c_{\mathrm{tar}})
-
v_\theta(z^{\mathrm{src}}_k,\tau(u_k),c_{\mathrm{src}}).
\]
Let $M_k\in[0,1]^{H\times W\times C}$ denote the feasible-region mask extracted from the model internal representation at the same query (main text, Sec.~\ref{sec:probe}).
Define the masked (effective) differential field
\[
\Delta V^{\mathrm{eff}}_k := M_k \odot \Delta V_k.
\]
We use two diagnostics.

\textbf{Leakage pressure.}
With a small $\varepsilon>0$,
\begin{equation}
\rho_k
:=\frac{\|(1-M_k)\odot \Delta V_k\|_2}{\|\Delta V_k\|_2+\varepsilon}.
\label{eq:rho_k_def_app}
\end{equation}
This is exactly Eq.~\eqref{eq:pressure} instantiated at $u_k$.

\textbf{Directional oscillation.}
Let $\cos(a,b)=\langle a,b\rangle/((\|a\|_2+\varepsilon)(\|b\|_2+\varepsilon))$.
We measure step-to-step instability on the effective field:
\begin{equation}
\omega_k
:=
\begin{cases}
0, & k=0,\\
1-\cos(\Delta V^{\mathrm{eff}}_k,\Delta V^{\mathrm{eff}}_{k-1}), & k\ge 1.
\end{cases}
\label{eq:omega_k_def_app}
\end{equation}

\subsection{Progress weights and outside-window mass}

The rollout functional in the main text integrates over the progress axis $s\in[0,1]$.
In the self-consistent discretization (Theorem~\ref{thm:contract_core}), progress increments are induced by the scale increments.
We therefore define per-step progress weights from the scale steps:
\begin{equation}
\Delta u_k := u_{k+1}-u_k,\qquad
\Delta s_k := \frac{|\Delta u_k|}{\sum_{j=0}^{K-1}|\Delta u_j|},
\qquad \sum_{k=0}^{K-1}\Delta s_k=1.
\label{eq:ds_from_du_app}
\end{equation}

Given an effective window $\mathcal U_{\mathrm{eff}}$ (defined in the main text by the fixed tail window on the scheduler path),
the outside-window progress mass is
\begin{equation}
m_{\mathrm{bad}}
:=\sum_{k=0}^{K-1}\Delta s_k\;\mathbf 1\{u_k\notin\mathcal U_{\mathrm{eff}}\}.
\label{eq:mbad_discrete_app}
\end{equation}

\subsection{Operational effective window $\mathcal U_{\mathrm{eff}}$ from a tail window}
\label{app:ueff}
NaviEdit does not recompute $\mathcal U_{\mathrm{eff}}$ per instance.
Instead, it operationalizes $\mathcal U_{\mathrm{eff}}$ by selecting a fixed tail window on the scheduler path and restricting visited scales to that suffix.
Concretely, let $\{(t^{(n)},u^{(n)})\}_{n=0}^{N-1}$ be the monotone path from noisy$\rightarrow$clean used by the scheduler.
We choose a reference depth $t_{\mathrm{ref}}$ and set the tail start index as $n_0:=\max(0,N-t_{\mathrm{ref}})$, so the visited interval is
\[
\mathcal U_{\mathrm{eff}}:=\{u^{(n)}:n\in\{n_0,\dots,N-1\}\}.
\]
Unless stated otherwise, the deployed PIE-Bench Navi-FlowEdit + gate system uses a 50-step scheduler grid ($N=50$), a 50-step editing budget ($K=50$), and $t_{\mathrm{ref}}=42$. 
Other budget settings such as the $K=20/28$ step-scaling study are controlled ablations and should not be read as the deployed Table~\ref{tab:main_comparison} setting.

\subsection{Sensitivity to $t_{\mathrm{ref}}$}
\label{app:ueff_sensitivity}
We sweep $t_{\mathrm{ref}}$ to vary the width of $\mathcal U_{\mathrm{eff}}$ while keeping the step budget fixed.
This directly tests how performance depends on the operational window choice and supports the use of a fixed tail window instead of per-instance oracle selection.
\begin{figure}[t]
\centering
\includegraphics[width=0.3\linewidth]{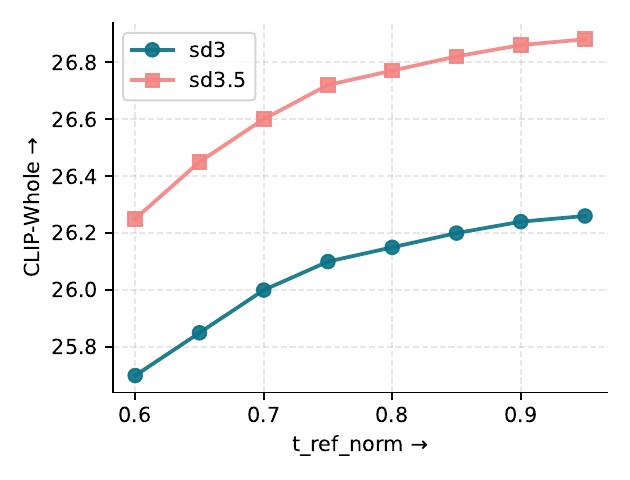}
\includegraphics[width=0.3\linewidth]{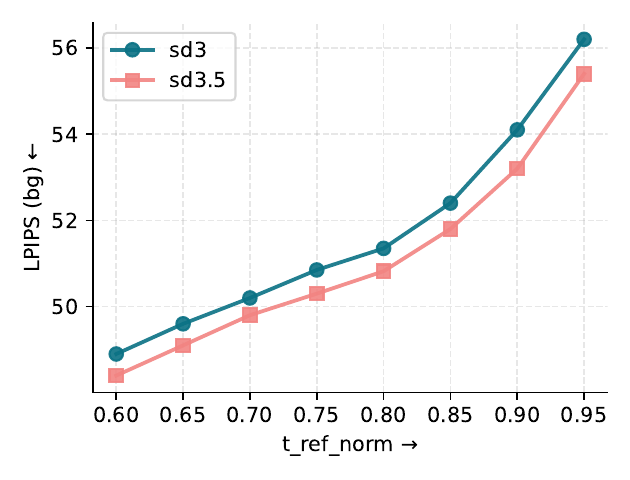}
\includegraphics[width=0.3\linewidth]{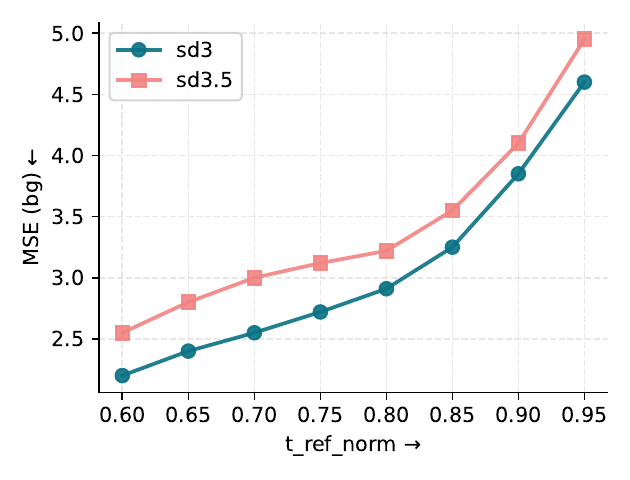}
\includegraphics[width=0.3\linewidth]{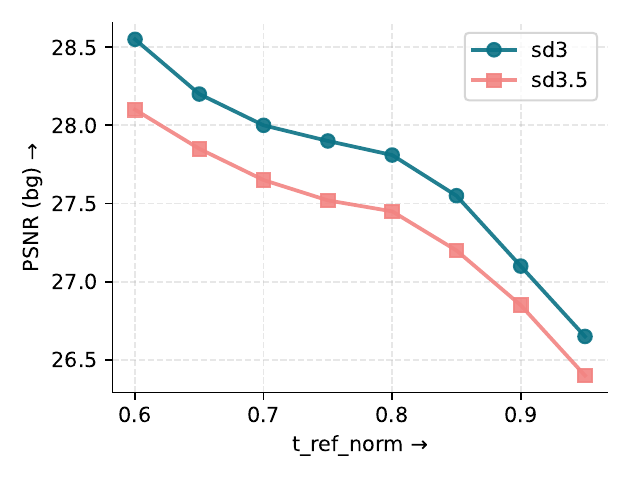}
\includegraphics[width=0.3\linewidth]{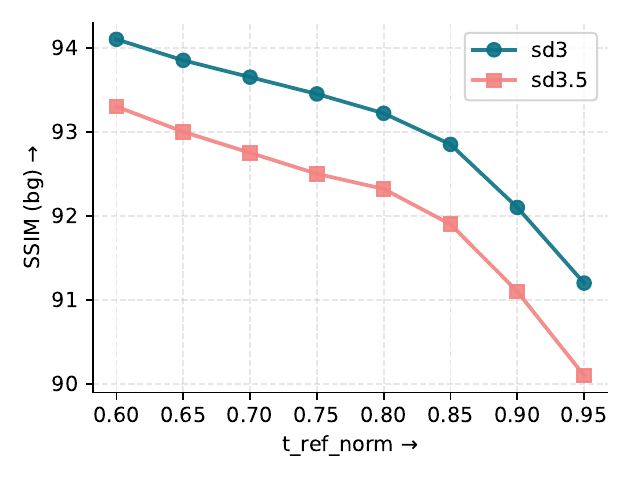}
\caption{\textbf{Sensitivity to the operational effective window.} Sweeping $t_{\mathrm{ref}}$ varies the tail-window width; the qualitative density-versus-range trend remains stable.}
\label{fig:ueff_sensitivity}
\end{figure}

\subsection{Local risk density $\phi$ and rollout proxy $\widehat{\mathcal G}$}

We instantiate a nonnegative per-step risk density as a fixed-weight combination of leakage, oscillation, and a discretization term:
\begin{equation}
\phi_k
:=
\lambda_\rho\,\rho_k
+
\lambda_\omega\,\omega_k
+
\lambda_h\,|\Delta u_k|.
\label{eq:phi_k_app}
\end{equation}
We use fixed weights across all experiments:
$(\lambda_\rho,\lambda_\omega,\lambda_h)=(1.0,1.0,0.05)$.
We set $\varepsilon=10^{-8}$ in all cosine/ratio computations.

The discrete rollout proxy is the progress-weighted accumulation:
\begin{equation}
\widehat{\mathcal G}
:=
\sum_{k=0}^{K-1}\Delta s_k\;\phi_k.
\label{eq:Ghat_def_app}
\end{equation}
When we report $\widehat{\mathcal G}$ for a schedule family, we average Eq.~\eqref{eq:Ghat_def_app} over the evaluation set and random seeds (fresh $\epsilon_k$ per step).

\subsection{Why this instantiation matches the two weak premises}

\textbf{Sensitivity to underdetermination.}
By construction, $\rho_k$ increases when the prompt-induced motion leaks outside the feasible region.
Since $\phi_k$ is monotone in $\rho_k$, steps at underdetermined scales yield larger local risk contributions.

\textbf{Density refinement decreases the discretization component inside a fixed window.}
Consider controls that stay inside a fixed window $\mathcal U^\star$ and differ only by density (i.e., by how the same total scale span is partitioned into steps).
The discretization part of Eq.~\eqref{eq:Ghat_def_app} is
\[
\widehat{\mathcal G}_{\mathrm{disc}}
=
\sum_{k=0}^{K-1}\Delta s_k\;\lambda_h|\Delta u_k|
=
\lambda_h\frac{\sum_{k=0}^{K-1}|\Delta u_k|^2}{\sum_{j=0}^{K-1}|\Delta u_j|}.
\]
Let $W_u:=\sum_{j=0}^{K-1}|\Delta u_j|$ be the total traversed scale span (fixed by the window endpoints).
Then
\begin{equation}
\widehat{\mathcal G}_{\mathrm{disc}}
=
\lambda_h\frac{\sum_k|\Delta u_k|^2}{W_u}
\le
\lambda_h\frac{(\max_k|\Delta u_k|)\sum_k|\Delta u_k|}{W_u}
=
\lambda_h\max_k|\Delta u_k|.
\label{eq:disc_bound_app}
\end{equation}
Thus, at fixed $W_u$, refining density (decreasing $\max_k|\Delta u_k|$) cannot increase the discretization component and typically decreases it.
This is the only property of discretization used by the main text claims: density refinement inside an effective window reduces the rollout proxy through the step-size term, while range expansion can introduce additional leakage/instability through $\rho_k$ and $\omega_k$ as the trajectory enters risky regimes.

\subsection{Sensitivity to $(\lambda_\rho,\lambda_\omega,\lambda_h)$}
\label{app:phi_sensitivity}
We evaluate robustness to the proxy instantiation by sweeping one weight at a time while keeping the other two fixed.
We report the induced changes in $\widehat{\mathcal G}$ and the resulting editability--fidelity trade-off under the same fixed step budget.
\begin{figure}[t]
\centering
\includegraphics[width=0.7\linewidth]{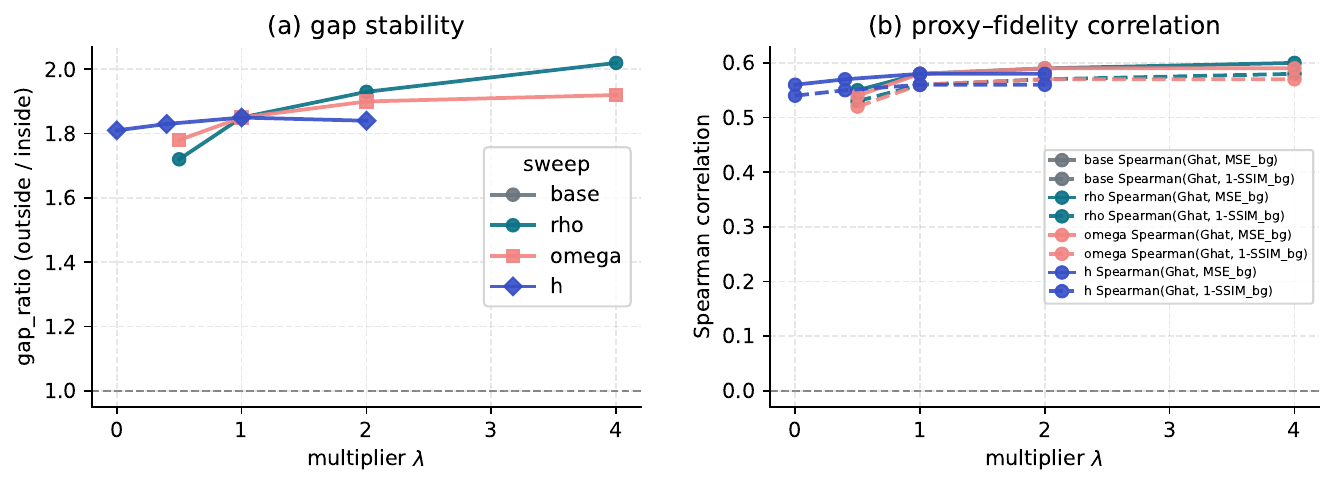}
\caption{\textbf{Sensitivity of the instantiated proxy.} Sweeping $\lambda_\rho,\lambda_\omega,\lambda_h$ shows that the qualitative trends used in the paper are stable within a reasonable range of weights.}
\label{fig:phi_sensitivity}
\end{figure}

\subsection{Timestep mapping $\tau(u)$}
\label{app:tau}
Let $\{t^{(n)}\}_{n=0}^{N-1}$ be the scheduler timesteps used at inference.
We define the normalized scale as $u^{(n)}=t^{(n)}/T$, where $T$ is the scheduler's training-time timestep scale (e.g., \texttt{num\_train\_timesteps}).
Accordingly, we use the linear map $\tau(u)=u\,T$, so that $\tau(u^{(n)})=t^{(n)}$.
In practice we maintain a monotone path $(t_{\mathrm{path}}[n],u_{\mathrm{path}}[n])$ and interpolate with a shared $p\in[0,1]$ to obtain $(t,u)$, ensuring that mixing, querying, and actuation share the same scale.

\section{Proof of Theorem \ref{thm:incomplete_core}}
\label{app:incomplete}

\begin{proof}
We make explicit the two ingredients used by the theorem: (i) a \emph{scale-regime gap} in the local risk density $\phi$, and (ii) a \emph{coupling-induced} lower bound on how much progress must be spent in the bad regime.

\medskip\noindent
\textbf{Step 0 (Notation).}
For any admissible control $u(\cdot)$, define the fraction of progress spent in the bad regime
\begin{equation}
m_{\mathrm{bad}}\bigl(u(\cdot)\bigr)
\;:=\;
\int_0^1 \mathbf 1\{u(s)\in\mathcal U_{\mathrm{bad}}\}\,ds \in[0,1].
\label{eq:mbad_def}
\end{equation}
Write $\mathcal G[u(\cdot)]$ as shorthand for $\mathcal G[x(\cdot),u(\cdot)]$ where $x(\cdot)$ is the induced trajectory under Definition~\ref{def:pg_edit}.

\medskip\noindent
\textbf{Step 1 (Risk gap assumption, made explicit).}
By the premise of Theorem~\ref{thm:incomplete_core}, there exist constants
\begin{equation}
0 \le c_{\mathrm{good}} < c_{\mathrm{bad}}
\label{eq:cg_cb}
\end{equation}
such that, for all states $x$ reachable under the considered instance,
\begin{equation}
\phi(x,u)\le c_{\mathrm{good}}\quad \text{for }u\in\mathcal U_{\mathrm{good}},
\qquad
\phi(x,u)\ge c_{\mathrm{bad}}\quad \text{for }u\in\mathcal U_{\mathrm{bad}}.
\label{eq:phi_bounds}
\end{equation}
We additionally use the explicit separation condition
\begin{equation}
c_{\mathrm{bad}}\delta(K)>c_{\mathrm{good}},
\label{eq:separation_condition}
\end{equation}
where $\delta(K)$ is the lower bound on the bad-regime progress mass enforced by the coupled family for the considered budgets.

\medskip\noindent
\textbf{Step 2 (A valid lower bound for any control that visits the bad regime).}
Using the nonnegativity of $\phi$ and its lower bound on $\mathcal U_{\mathrm{bad}}$,
\begin{align}
\mathcal G[u(\cdot)]
&=\int_0^1 \phi\bigl(x(s),u(s)\bigr)\,ds \nonumber\\
&\ge \int_{\{s:\,u(s)\in\mathcal U_{\mathrm{bad}}\}}
      \phi\bigl(x(s),u(s)\bigr)\,ds \nonumber\\
&\ge c_{\mathrm{bad}}
      \int_0^1 \mathbf 1\{u(s)\in\mathcal U_{\mathrm{bad}}\}\,ds \nonumber\\
&= c_{\mathrm{bad}}\,m_{\mathrm{bad}}(u(\cdot)).
\label{eq:G_lower_mbad}
\end{align}

\medskip\noindent
\textbf{Step 3 (Coupling implies a bad-mass lower bound).}
By the theorem assumption, for the considered budgets,
\begin{equation}
m_{\mathrm{bad}}(u(\cdot))\ge \delta(K)>0,
\qquad
\forall\,u(\cdot)\in\mathcal F_{\mathrm{couple}}(K).
\label{eq:deltaK}
\end{equation}

\medskip\noindent
\textbf{Step 4 (Lower bound for the coupled family).}
Combining \eqref{eq:G_lower_mbad} with \eqref{eq:deltaK} yields, for the considered budgets,
\begin{equation}
\inf_{u(\cdot)\in\mathcal F_{\mathrm{couple}}(K)} \mathcal G[u(\cdot)]
\;\ge\;
c_{\mathrm{bad}}\,\delta(K).
\label{eq:coupled_lower}
\end{equation}

\medskip\noindent
\textbf{Step 5 (A decoupled control with smaller objective).}
By the decoupled-feasibility assumption, choose
$u^\star(\cdot)\in\mathcal F_{\mathrm{decouple}}(K)$ such that
$u^\star(s)\in\mathcal U_{\mathrm{good}}$ for all $s\in[0,1]$.
By \eqref{eq:phi_bounds},
\begin{equation}
\mathcal G[u^\star(\cdot)]
=
\int_0^1 \phi\bigl(x^\star(s),u^\star(s)\bigr)\,ds
\;\le\;
\int_0^1 c_{\mathrm{good}}\,ds
= c_{\mathrm{good}},
\label{eq:decouple_upper}
\end{equation}
hence
\begin{equation}
\inf_{u(\cdot)\in\mathcal F_{\mathrm{decouple}}(K)} \mathcal G[u(\cdot)]
\;\le\;
c_{\mathrm{good}}.
\label{eq:decouple_inf_upper}
\end{equation}

\medskip\noindent
\textbf{Step 6 (Strict gap).}
Combining \eqref{eq:coupled_lower} and \eqref{eq:decouple_inf_upper} gives, for the considered budgets,
\begin{align}
\inf_{u(\cdot)\in\mathcal F_{\mathrm{couple}}(K)} \mathcal G[u(\cdot)]
-
\inf_{u(\cdot)\in\mathcal F_{\mathrm{decouple}}(K)} \mathcal G[u(\cdot)]
&\ge c_{\mathrm{bad}}\delta(K)-c_{\mathrm{good}} \;>\; 0,
\label{eq:method_gap}
\end{align}
where the last inequality follows from the separation condition in \eqref{eq:separation_condition}.
This establishes the incompleteness of the coupled family: the coupled budget-to-range constraint enforces a sufficiently costly progress mass in $\mathcal U_{\mathrm{bad}}$,
while the decoupled family can allocate progress inside $\mathcal U_{\mathrm{good}}$, yielding a strictly smaller attainable rollout functional value.
\end{proof}

\subsection{Coupled vs.\ decoupled schedules under increasing step budget}
\label{app:step-scaling}

We evaluate how increasing the practical step budget changes the fidelity--editability tradeoff under two schedule families that differ only in scale allocation. The coupled family follows the common heuristic where increasing the budget effectively expands the accessed scale range toward noisier regimes, whereas the decoupled family keeps a fixed effective window and uses additional steps to densify sampling within that window. All other components are held fixed, including model, guidance, gating, and self-consistent mixing/querying/update axes.

For Navi-FlowEdit, the step budget coincides with the number of model evaluations in our controller. For FlowEdit, we report matched \emph{editing steps} rather than a literal model-evaluation count, because the official implementation uses a different scheduler/accounting convention. Increasing the budget from 20 to 28 improves edit alignment for both schedule families (higher CLIP-Edit), but only the coupled schedule suffers a pronounced fidelity collapse (lower PSNR/SSIM and higher Dist/MSE/LPIPS). In contrast, the decoupled schedule preserves fidelity while retaining comparable edit strength. This isolates the scale-allocation effect in Theorems~\ref{thm:incomplete_core} and~\ref{thm:density_core}: allocating additional budget to range expansion forces nontrivial mass into high-risk scales, whereas allocating it to density refinement within the effective window improves numerical stability without sacrificing editability.

\begin{table}[t]
\centering
\caption{Effect of increasing step budget under coupled vs.\ decoupled scale allocation on SD3 with the gate-enabled Navi-FlowEdit configuration held fixed. Coupled schedules expand into noisier scales as the budget increases, while decoupled schedules densify within a fixed effective window. The FlowEdit row is a matched-step controlled reference; its official-step result is reported in Table~\ref{tab:main_comparison}.}
\resizebox{\linewidth}{!}{
\begin{tabular}{l l cccccccccc}
\toprule
Method & Schedule & Dist$\downarrow$ & PSNR$\uparrow$ & MSE$\downarrow$ & SSIM$\uparrow$ & LPIPS$\downarrow$ & CLIP-Whole$\uparrow$ & CLIP-Edit$\uparrow$ & Steps & Runtime(s)$\downarrow$ & VRAM(MiB)$\downarrow$ \\
\midrule
\multirow{4}{*}{Navi-FlowEdit + gate}
& Coupled   & 14.28 & 26.12 & 4.75 & 90.10 & 71.29 & 24.12 & 21.35 & 20 & 14.95 & 22120 \\
& Decoupled & 10.84 & 27.92 & 2.82 & 93.61 & 48.62 & 24.51 & 21.62 & 20 & 14.95 & 22120 \\
\cmidrule(lr){2-12}
& Coupled   & 27.22 & 22.18 & 9.61 & 88.22 & 87.98 & 26.01 & 22.55 & 28 & 17.52 & 22120 \\
& Decoupled & 11.10 & 27.81 & 2.91 & 93.22 & 51.35 & 26.15 & 22.67 & 28 & 17.52 & 22120 \\
\midrule
FlowEdit (matched-step reference) & Coupled & 17.06 & 21.52 & 8.33 & 86.09 & 129.54 & 25.80 & 22.42 & 20 & 14.55 & 17140 \\
\bottomrule
\end{tabular}
}
\label{tab:couple-decouple-steps}
\end{table}

\begin{figure}[t]
\centering
\includegraphics[width=0.24\linewidth]{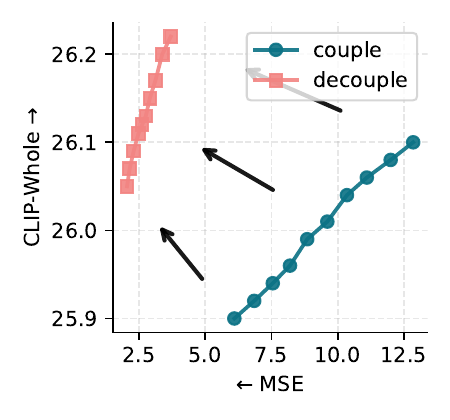}
\includegraphics[width=0.24\linewidth]{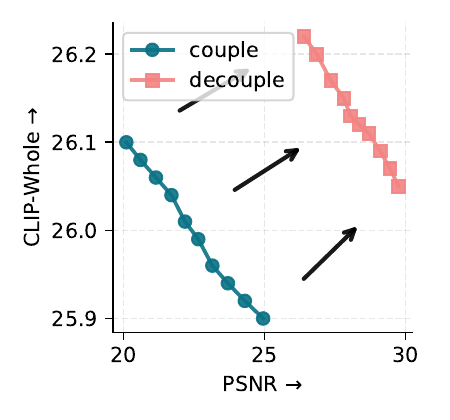}
\vspace{-10pt}
\caption{\textbf{Decoupling gains a better balance under a matched step budget.}
We plot CLIP-Whole (edit compliance) against background fidelity for coupled and decoupled schedule families under the same model and evaluation budget.
Decoupled schedules consistently attain higher PSNR at comparable CLIP-Whole, indicating a family-level frontier shift rather than a single tuned operating point.}
\label{fig:pareto}
\end{figure}

\section{Proof of Theorem~\ref{thm:density_core}}
\label{app:density}

\begin{proof}
We prove the claim for the finite-budget quantity controlled by inference.
All statements are conditioned on the fixed anchor realization specified in
Theorem~\ref{thm:density_core}.

For a $K$-step self-consistent rollout $\{(x_k,u_k)\}_{k=0}^{K}$, let
$\Delta s_k\ge0$ be normalized progress weights with
$\sum_{k=0}^{K-1}\Delta s_k=1$. Define the implemented rollout cost and
implemented bad-regime mass as
\begin{equation}
\mathcal G_K[x_{0:K},u_{0:K}]
:=
\sum_{k=0}^{K-1}\Delta s_k\,\phi(x_k,u_k),
\qquad
m_{\mathrm{bad}}^K
:=
\sum_{k=0}^{K-1}\Delta s_k\,
\mathbf 1\{u_k\in\mathcal U_{\mathrm{bad}}\}.
\label{eq:GK_mbadK_def}
\end{equation}

\paragraph{Range-expanding rollouts.}
For any range-expanding rollout satisfying
$m_{\mathrm{bad}}^K\ge\delta_K$, the assumptions
$\phi\ge0$ and $\phi(x,u)\ge c_{\mathrm{bad}}$ on
$\mathcal U_{\mathrm{bad}}$ give
\begin{align}
\mathcal G_K[x_{0:K},u_{0:K}]
&=
\sum_{k=0}^{K-1}\Delta s_k\,\phi(x_k,u_k)
\nonumber\\
&\ge
\sum_{k=0}^{K-1}\Delta s_k\,
c_{\mathrm{bad}}\mathbf 1\{u_k\in\mathcal U_{\mathrm{bad}}\}
\nonumber\\
&=
c_{\mathrm{bad}}m_{\mathrm{bad}}^K
\ge
c_{\mathrm{bad}}\delta_K .
\label{eq:range_floor_GK}
\end{align}
Thus every range-expanding rollout in the considered family has implemented
cost at least $c_{\mathrm{bad}}\delta_K$.

\paragraph{Density refinement inside the good window.}
Consider the self-consistent density-refined rollout inside
$\mathcal U^\star\subseteq\mathcal U_{\mathrm{good}}$ from the theorem.
Let $u^\star(s)$ be its continuous good-window control and
$x^\star(s)$ the corresponding ideal trajectory. Let $x_k^{(K)}$ be the
$K$-step self-consistent Euler rollout on the grid $s_k=k/K$ with
$h_s=1/K$:
\begin{equation}
x_{k+1}^{(K)}
=
x_k^{(K)}
+
\bigl(u^\star(s_{k+1})-u^\star(s_k)\bigr)
\Delta V_{\mathrm{eff}}\!\left(x_k^{(K)};u^\star(s_k),\epsilon_k\right).
\label{eq:density_euler_new}
\end{equation}
By the theorem assumption,
\begin{equation}
\max_{0\le k\le K}
\left\|x_k^{(K)}-x^\star(s_k)\right\|
\le
\frac{C_{\mathrm E}}{K}.
\label{eq:euler_bound_new}
\end{equation}
Since $u^\star(s_k)\in\mathcal U^\star$ and
$\phi(x,u)\le c_{\mathrm{good}}$ on $\mathcal U^\star$, while $\phi$ is
$L_\phi$-Lipschitz in $x$ there, the implemented cost of this
density-refined rollout satisfies
\begin{align}
\mathcal G_K^{\mathrm{dens}}
&:=
\sum_{k=0}^{K-1}h_s\,
\phi\!\left(x_k^{(K)},u^\star(s_k)\right)
\nonumber\\
&\le
\sum_{k=0}^{K-1}h_s\,
\phi\!\left(x^\star(s_k),u^\star(s_k)\right)
+
\sum_{k=0}^{K-1}h_s L_\phi
\left\|x_k^{(K)}-x^\star(s_k)\right\|
\nonumber\\
&\le
c_{\mathrm{good}}
+
L_\phi
\max_k
\left\|x_k^{(K)}-x^\star(s_k)\right\|
\nonumber\\
&\le
c_{\mathrm{good}}
+
\frac{L_\phi C_{\mathrm E}}{K}.
\label{eq:density_upper_GK}
\end{align}

\paragraph{Strict comparison.}
By the margin assumption in Theorem~\ref{thm:density_core},
\begin{equation}
c_{\mathrm{bad}}\delta_K-c_{\mathrm{good}}\ge\gamma>0.
\label{eq:margin_assumption_E}
\end{equation}
If $K>L_\phi C_{\mathrm E}/\gamma$, then
\begin{equation}
c_{\mathrm{good}}+\frac{L_\phi C_{\mathrm E}}{K}
<
c_{\mathrm{good}}+\gamma
\le
c_{\mathrm{bad}}\delta_K .
\label{eq:density_below_floor}
\end{equation}
Combining \eqref{eq:range_floor_GK} and
\eqref{eq:density_upper_GK} yields
\begin{equation}
\mathcal G_K^{\mathrm{dens}}
<
\inf_{\substack{
K\text{-step range-expanding rollouts}\\
m_{\mathrm{bad}}^K\ge\delta_K}}
\mathcal G_K .
\label{eq:density_strict_domination}
\end{equation}
Hence, under the stated assumptions and fixed budget $K$, self-consistent
density refinement inside $\mathcal U^\star$ attains lower implemented
rollout cost than any range-expanding rollout that allocates at least
$\delta_K$ progress mass to $\mathcal U_{\mathrm{bad}}$.
\end{proof}

\section{Proof of Theorem \ref{thm:contract_core}}
\label{app:contract}

\begin{proof}
We prove the two directions in Theorem~\ref{thm:contract_core}.
Throughout, fix a run with a monotone scale sequence $\{u_k\}_{k=0}^{K}$ and fresh anchors $\epsilon_k\sim\mathcal N(0,I)$.
Write $\Delta u_k=u_{k+1}-u_k$.
For any scale $u$, define the co-located anchors (as in Sec.~\ref{sec:probe})
\[
z^{\mathrm{src}}(x_{\mathrm{src}};u,\epsilon)=(1-u)x_{\mathrm{src}}+u\epsilon,
\qquad
z^{\mathrm{tar}}(x;x_{\mathrm{src}};u,\epsilon)=x+\bigl(z^{\mathrm{src}}(x_{\mathrm{src}};u,\epsilon)-x_{\mathrm{src}}\bigr),
\]
and the corresponding effective differential field
\begin{equation}
\Delta V_{\mathrm{eff}}(x;u,\epsilon)
:=
\Pi_{M(u)}\!\left(
v_\theta\!\left(z^{\mathrm{tar}}(x;x_{\mathrm{src}};u,\epsilon),\tau(u),c_{\mathrm{tar}}\right)
-
v_\theta\!\left(z^{\mathrm{src}}(x_{\mathrm{src}};u,\epsilon),\tau(u),c_{\mathrm{src}}\right)
\right),
\label{eq:DVeff_def_contract}
\end{equation}
where $\Pi_{M(u)}$ denotes the active-set projection induced by the feasible-region mask at that query (mask extraction details are irrelevant to this proof; we only use that $\Pi_{M(u)}$ is well-defined for the same query state).

\medskip\noindent
\textbf{Step 0 (What it means to be a consistent discretization).}
Definition~\ref{def:pg_edit} states that editing is a controlled ODE on the progress axis $s$:
\begin{equation}
\frac{dx}{ds}=\frac{du}{ds}\,\Delta V_{\mathrm{eff}}\!\bigl(x(s);u(s),\epsilon(s)\bigr).
\label{eq:ode_contract}
\end{equation}
A first-order (explicit Euler) discretization of \eqref{eq:ode_contract} on a grid $0=s_0<\cdots<s_K=1$ reads
\begin{equation}
x_{k+1}
=
x_k
+
\Delta s_k\,\frac{du}{ds}(s_k)\,\Delta V_{\mathrm{eff}}(x_k;u(s_k),\epsilon(s_k)),
\qquad
\Delta s_k:=s_{k+1}-s_k.
\label{eq:euler_contract_general}
\end{equation}
If we choose the progress parameterization so that $s$ is proportional to the traversed scale coordinate, namely
\begin{equation}
\Delta s_k = \frac{\Delta u_k}{u(1)-u(0)}\quad \text{for monotone }u(\cdot),
\label{eq:s_param_choice_contract}
\end{equation}
then $\Delta s_k\frac{du}{ds}(s_k)=\Delta u_k$ and \eqref{eq:euler_contract_general} becomes
\begin{equation}
x_{k+1}=x_k+\Delta u_k\,\Delta V_{\mathrm{eff}}(x_k;u_k,\epsilon_k),
\label{eq:euler_contract_target}
\end{equation}
which is exactly the update stated in the theorem.
Thus, the ``consistent discretization'' claim reduces to checking that the discrete step uses the \emph{same} scale coordinate $u_k$ to (i) define the anchors, (ii) query the model through $\tau(u_k)$, and (iii) scale the update by $\Delta u_k$.

\medskip\noindent
\textbf{Step 1 (Consistency when the step is self-consistent).}
Assume the step is self-consistent in the sense of the theorem:
it uses $u_k$ for mixing to form $z^{\mathrm{src}}_k=z^{\mathrm{src}}(x_{\mathrm{src}};u_k,\epsilon_k)$,
uses the same $u_k$ for the query time $\tau(u_k)$ and mask $\Pi_{M(u_k)}$,
and uses $\Delta u_k$ as the multiplier of the differential field computed at $(x_k,u_k,\epsilon_k)$.
By definition \eqref{eq:DVeff_def_contract}, the computed increment is exactly $\Delta V_{\mathrm{eff}}(x_k;u_k,\epsilon_k)$.
Therefore the implemented update is \eqref{eq:euler_contract_target}, which is the explicit Euler step of the controlled ODE \eqref{eq:ode_contract} under the progress parameterization \eqref{eq:s_param_choice_contract}.
This proves the second part of Theorem~\ref{thm:contract_core}.

\medskip\noindent
\textbf{Step 2 (Inconsistency under axis mismatch: no ODE can generate the same step).}
We now consider a mismatched step where the three occurrences of the scale coordinate disagree.
To keep the statement minimal and aligned with the ablations in Sec.~\ref{sec:method_contract}, it suffices to analyze a single mismatch mode; the same argument applies to the others.

\smallskip\noindent
\emph{Mismatch-query.}
Suppose mixing and update use $u_k$ and $\Delta u_k$, but the model is queried at $\tau(\tilde u_k)$ for some $\tilde u_k\neq u_k$.
Then the implemented update is
\begin{equation}
x_{k+1}
=
x_k
+
\Delta u_k\,
\widetilde{\Delta V}_{\mathrm{eff}}(x_k;u_k,\tilde u_k,\epsilon_k),
\label{eq:mismatch_update}
\end{equation}
where $\widetilde{\Delta V}_{\mathrm{eff}}$ denotes the differential field formed by anchors constructed at $u_k$ but queried at $\tilde u_k$:
\[
\widetilde{\Delta V}_{\mathrm{eff}}(x;u,\tilde u,\epsilon)
:=
\Pi_{M(\tilde u)}\!\left(
v_\theta\!\left(z^{\mathrm{tar}}(x;x_{\mathrm{src}};u,\epsilon),\tau(\tilde u),c_{\mathrm{tar}}\right)
-
v_\theta\!\left(z^{\mathrm{src}}(x_{\mathrm{src}};u,\epsilon),\tau(\tilde u),c_{\mathrm{src}}\right)
\right).
\]
Compare this with the self-consistent Euler step \eqref{eq:euler_contract_target}:
\[
x_{k+1}^{\mathrm{sc}}
=
x_k
+
\Delta u_k\,
\Delta V_{\mathrm{eff}}(x_k;u_k,\epsilon_k).
\]
The mismatch introduces an additive deviation
\begin{equation}
b_k
:=
\Delta u_k\left(
\widetilde{\Delta V}_{\mathrm{eff}}(x_k;u_k,\tilde u_k,\epsilon_k)
-
\Delta V_{\mathrm{eff}}(x_k;u_k,\epsilon_k)
\right),
\label{eq:bias_def}
\end{equation}
so that $x_{k+1}=x_{k+1}^{\mathrm{sc}}+b_k$.

We claim that, unless $\tilde u_k=u_k$, the update \eqref{eq:mismatch_update} cannot be written as a consistent discretization of Definition~\ref{def:pg_edit} for the \emph{same} control input $u(\cdot)$.
Indeed, in Definition~\ref{def:pg_edit}, the right-hand side at progress time $s_k$ is determined by the triple
\[
\left(\text{anchors at }u(s_k),\ \text{query at }\tau(u(s_k)),\ \text{actuation by }du/ds(s_k)\right),
\]
all sharing the \emph{same} scale value $u(s_k)$.
A first-order consistent discretization at grid point $s_k$ must therefore use \eqref{eq:euler_contract_target} with the same $u_k$ for anchors and query time.
But \eqref{eq:mismatch_update} uses anchors at $u_k$ and query time at $\tilde u_k\neq u_k$, which is not of the Euler form of \eqref{eq:ode_contract} under any choice of $du/ds$ at $s_k$ because $du/ds$ only rescales the vector field evaluated at $u_k$; it cannot change the vector field evaluation from $\tau(u_k)$ to $\tau(\tilde u_k)$.

Equivalently, suppose for contradiction that \eqref{eq:mismatch_update} were a consistent discretization of some instance of \eqref{eq:ode_contract} with control $u(\cdot)$ satisfying $u(s_k)=u_k$.
Then the one-step increment divided by $\Delta u_k$ would have to equal $\Delta V_{\mathrm{eff}}(x_k;u_k,\epsilon_k)$, which is defined by querying at $\tau(u_k)$.
But the actual increment divided by $\Delta u_k$ equals $\widetilde{\Delta V}_{\mathrm{eff}}(x_k;u_k,\tilde u_k,\epsilon_k)$ which queries at $\tau(\tilde u_k)$.
These are different functions of $(x_k,\epsilon_k)$ whenever $\tilde u_k\neq u_k$ unless the model outputs are scale-invariant in $t$ (which they are not in general, and would contradict the empirical regime structure in Sec.~\ref{sec:regimes}).
Therefore no controlled ODE of the form in Definition~\ref{def:pg_edit} with the same control $u(\cdot)$ can yield \eqref{eq:mismatch_update} as its first-order consistent step.

\medskip\noindent
\textbf{Step 3 (The systematic bias term and why it accumulates).}
Equation~\eqref{eq:bias_def} gives an explicit bias term $b_k$ measuring the discrepancy between what is \emph{measured} (the differential field under one scale) and what is \emph{applied} (an update step sized by another scale increment).
This term is systematic because it is induced deterministically by the mismatch pattern, not by stochasticity in $\epsilon_k$ (fresh noise is still used).

When such a bias persists over steps, it accumulates as
\begin{equation}
x_K
=
x_0
+
\sum_{k=0}^{K-1}\Delta u_k\,\Delta V_{\mathrm{eff}}(x_k;u_k,\epsilon_k)
+
\sum_{k=0}^{K-1} b_k,
\label{eq:bias_accum}
\end{equation}
so the mismatch behaves like an uncontrolled forcing term on top of the intended controlled dynamics.
In practice this manifests as drift, artifacts, and eventual collapse into a generative mode, which is exactly what the axis-mismatch ablations in Figure~\ref{fig:axis_mismatch} are designed to test.

\medskip\noindent
\textbf{Step 4 (Other mismatch modes).}
The same structure applies to mismatch-mix (anchors built at $\tilde u_k$ but query/update at $u_k$) and mismatch-step (update scaled by $\Delta \tilde u_k$ while measuring at $u_k$):
each can be written as the self-consistent Euler increment plus an additive deviation of the form \eqref{eq:bias_def}, induced by evaluating different components of the step on different scale coordinates.
Thus, in all mismatch cases, the step is not a consistent discretization of Definition~\ref{def:pg_edit} and incurs a systematic bias term.

This proves Theorem~\ref{thm:contract_core}.
\end{proof}


\section{Optional Internal Feasible-Region Gate in Navi-FlowEdit}
\label{app:mask}

\subsection{Internal feasible-region mask extraction (no external masks)}
\label{app:mask_impl}

The internal gate is an optional component used in Navi-FlowEdit to suppress off-region drift. It is not required by the Navi controller itself, and variants such as Navi-FlowAlign are run with $M\equiv 1$. For Navi-FlowEdit, we compute the feasible-region gate $M_k$ from internal signals produced by the same model call used to obtain $\Delta V_k$, and never require user-provided masks.
At step $k$, we form the co-located pair $(z_k^{\mathrm{src}},z_k^{\mathrm{tar}})$ and query the model once with a concatenated batch (source and target, optionally with CFG).
Let $h^{(\ell)}(\cdot)\in\mathbb{R}^{N\times C}$ denote the token-wise hidden states emitted by transformer block $\ell$ for that same forward pass, with $N=H\cdot W$.

We use a training-free representation-difference heatmap:
\begin{equation}
A_k
\;:=\;
\frac{1}{|\mathcal L|}\sum_{\ell\in\mathcal L}
\mathrm{reshape}_{H\times W}\!\left(
\frac{1}{C}\left\|h^{(\ell)}(z_k^{\mathrm{tar}})-h^{(\ell)}(z_k^{\mathrm{src}})\right\|_2^2
\right)\in\mathbb{R}^{H\times W},
\label{eq:mask_repr_heat}
\end{equation}
where $\mathcal L$ is a fixed set of blocks chosen once for the model.
In our implementation, $\mathcal L$ is the middle third of transformer blocks (from $\lfloor L/3\rfloor$ to $\lfloor 2L/3\rfloor-1$), which provides stable localization while avoiding early high-noise features and late texture-dominated features.

We then convert $A_k$ into a soft gate $M_k\in[0,1]^{H\times W}$ using a small sequence of deterministic operations:
\begin{align}
\tilde A_k &:= \mathrm{NormMinMax}(A_k), \\
\bar A_k &:= \mathrm{AvgPool}(\tilde A_k;\,k_{\mathrm{blur}}), \\
\hat M_k &:= \mathbf 1\{\bar A_k \ge \mathrm{Quantile}(\bar A_k, q)\},
\label{eq:mask_quantile}
\end{align}
followed by area control and temporal smoothing:
\begin{align}
\hat M_k &\leftarrow \mathrm{AreaBound}(\hat M_k;\,a_{\min},a_{\max}) ,\\
M_k &\leftarrow \beta M_{k-1}+(1-\beta)\hat M_k,\qquad M_k\in[0,1]^{H\times W}.
\label{eq:mask_ema}
\end{align}

Default hyperparameters (used throughout unless stated otherwise) are:
$q=0.90$ (keep the top $10\%$ salient region),
$a_{\min}=0.02$ and $a_{\max}=0.65$ (area fraction bounds),
$k_{\mathrm{blur}}=5$ (average pooling kernel),
and $\beta=0.85$ (EMA).
We additionally apply a scale-aware dilation to reduce brittleness at noisier scales:
\begin{equation}
M_k \leftarrow \mathrm{MaxPool}(M_k;\,k_{\mathrm{dil}}(u_k)),\qquad
k_{\mathrm{dil}}(u_k)=1+\mathrm{round}\!\bigl((k_{\max}-1)\,u_k\bigr),
\label{eq:mask_dilation}
\end{equation}
with $k_{\max}=7$ and odd kernel sizes.

If model hooks are unavailable, we fall back to a $\Delta V$-saliency mask:
\begin{equation}
A_k^{\Delta V} := \frac{1}{C}\|\Delta V_k\|_2^2\in\mathbb{R}^{H\times W},
\end{equation}
followed by the same normalization, blur, quantile thresholding, area control, and EMA.

Finally, the gate is used only to form the effective differential field
\begin{equation}
\Delta V_k^{\mathrm{eff}} := (M_k^{\gamma})\odot \Delta V_k,
\label{eq:mask_gate_dv}
\end{equation}
with $\gamma=1.0$ by default.
We do not apply any hard replacement outside the mask unless explicitly enabled.

\subsection{Active-set growth}
\label{app:mask_growth}

The gate above is designed to be conservative; however, overly small gates can stall semantic motion.
We therefore allow a lightweight active-set growth rule driven by a pressure ratio computed from already-available tensors:
\begin{equation}
r_k :=
\frac{\left\|(1-M_k)\odot \Delta V_k\right\|_2}{\left\|\Delta V_k\right\|_2+\varepsilon}.
\label{eq:mask_growth_ratio}
\end{equation}
If $r_k>\tau_r$ for $P$ consecutive steps, we expand $M_k$ by adding a small set of pixels with highest outside-mask energy:
\begin{align}
P_k &:= \mathrm{NormMinMax}\!\left(\frac{1}{C}\left\|(1-M_k)\odot \Delta V_k\right\|_2^2\right),\\
\Delta M_k &:= \mathbf 1\{P_k \ge \mathrm{Quantile}(P_k, q_g)\},\\
M_k &\leftarrow \max(M_k,\,\mathrm{MaxPool}(\Delta M_k;k_g)),
\label{eq:mask_growth_update}
\end{align}
followed by the same area control.
Defaults are $\tau_r=0.35$, $P=2$, $q_g=0.97$, and $k_g=3$.

\subsection{Compute overhead of internal masking}
\label{app:mask_overhead}

Mask extraction does not introduce additional model evaluations: $M_k$ is computed from tensors produced by the same forward pass used to compute $\Delta V_k$.
It does introduce a small constant-factor overhead from per-step tensor operations (pooling, quantiles, and optional transformer-block hooks).
We report wall-clock time and peak VRAM for the same step budget and batch size.

\begin{table}[t]
\centering
\caption{Compute overhead of internal masking on SD3 (PIE-Bench, bs=1, 1024$\times$1024, 20 edit steps, same CFG and precision). Extra model calls are always zero because masking reuses the same model query. VRAM reports peak allocated memory.}
\begin{tabular}{l c c c}
\toprule
Mask setting & Extra model calls & Time/img (s)$\downarrow$ & VRAM (MiB)$\downarrow$ \\
\midrule
$M\equiv 1$ (mask disabled) & 0 & 14.55  & 17140 \\
Internal mask (ours) & 0 & 14.95 & 22120 \\
\bottomrule
\end{tabular}
\label{tab:mask_overhead}
\end{table}

\subsection{Ablation: Navi-FlowEdit ($M\equiv 1$) versus Navi-FlowEdit + gate}
\label{app:mask_ablation}

To isolate the contribution of the optional gate from scale-density allocation, we fix the same effective window $\mathcal U_{\mathrm{eff}}$, the same decoupled schedule family, match step budget and random seeds, and disable online navigation. The corresponding matched rows are reported directly in Table~\ref{tab:main_comparison}, so we do not duplicate a separate table here.

The optional gate substantially improves preservation, while in this setting it does not reduce CLIP compliance; the semantic scores are slightly higher, although the primary benefit is preservation.

We emphasize that $\mathcal U_{\mathrm{eff}}$ is operationalized as a fixed tail window and is not selected per instance from $\rho(u)$.
Therefore, masking affects the local update direction via Eq.~\eqref{eq:mask_gate_dv} and the diagnostic statistics (e.g., $\rho_k$), but it does not change the visited scale window in our reported experiments.

\section{ImgEdit-Bench Results Across Compatible Editors}
\label{app:imgedit_transfer}

PIE-Bench isolates preservation under source-region masks, but it does not cover instruction-style, UGE, or multi-turn editing. We therefore additionally evaluate on ImgEdit-Bench~\cite{ye2025imgedit}. In our infrastructure, this benchmark includes the nine Basic single-turn categories (add, remove, replace, adjust, background, style, action, extract, compose), the official UGE suite, and multi-turn subsets for content memory, content understanding, and version backtracking.

Prompt-pair differential editors require both a source description and a target description, whereas ImgEdit-Bench natively provides a source image and an edit instruction. To adapt the benchmark fairly, we use a single processed-annotation layer generated with Qwen2.5-VL~\cite{bai2025qwen25vl}, which converts each sample into a shared $(\texttt{src\_prompt}, \texttt{tar\_prompt})$ pair. InfEdit, FlowAlign, and their Navi variants all consume this same prompt-pair annotation. All reported scores follow the official ImgEdit-Judge protocol; we report per-category Basic scores, the Basic average, and the UGE average. The multi-turn subset is retained as review manifests for qualitative inspection rather than collapsed into a single scalar.

\begin{table*}[t]
\centering
\caption{Results on ImgEdit-Bench across compatible editors. Navi-X denotes applying the same decoupled controller to editor X; both Navi rows here use $M\equiv 1$.}
\resizebox{\textwidth}{!}{%
\begin{tabular}{l l ccccccccccc}
\toprule
Method & Type & Add & Remove & Replace & Adjust & Background & Style & Action & Extract & Compose & Basic Avg.$\uparrow$ & UGE Avg.$\uparrow$ \\
\midrule
InfEdit & Diffusion-based training-free & 2.33 & 1.79 & 2.21 & 2.05 & 2.31 & 2.29 & 1.61 & 3.96 & 2.30 & 2.43 & 2.96 \\
\textbf{Navi-InfEdit ($M\equiv 1$)} & Diffusion-based training-free & \textbf{2.42} & \textbf{1.92} & \textbf{2.40} & \textbf{2.07} & \textbf{2.55} & \textbf{2.31} & \textbf{2.12} & \textbf{4.10} & \textbf{2.51} & \textbf{2.57} & \textbf{3.02} \\
FlowAlign & Flow-based training-free & 4.14 & \textbf{2.41} & 2.62 & \textbf{2.73} & 2.74 & \textbf{2.63} & 2.42 & \textbf{3.91} & \textbf{2.26} & 3.01 & 3.04 \\
\textbf{Navi-FlowAlign ($M\equiv 1$)} & Flow-based training-free & \textbf{4.28} & 2.37 & \textbf{2.80} & 2.63 & \textbf{2.99} & 2.39 & \textbf{2.81} & 3.84 & \textbf{2.26} & \textbf{3.04} & \textbf{3.11} \\
\bottomrule
\end{tabular}%
}
\label{tab:imgedit_transfer}
\end{table*}

Table~\ref{tab:imgedit_transfer} shows positive average gains for both compatible editors. 
Both Navi rows are ungated, so this evidence is independent of the optional gate used in the default FlowEdit system. 
For InfEdit, the gains are category-wide in this table; for FlowAlign, the gains are average-level rather than universal across categories, with the clearest positive changes in background, action, and replace. 
NaviEdit is therefore not a claim of per-category dominance, but a portable inference-time control principle that can improve compatible editors on average.

\section{Failure Cases and Scope}
\label{app:failure_cases}

Navi improves \emph{how} the trajectory allocates computation, but it does not by itself solve all support-estimation or scene-reasoning failures. We repeatedly observe two representative failure modes.

\paragraph{Incomplete semantic migration.}
When the editable support is estimated too conservatively, or when the effective window is too preservation-biased for a difficult replacement, the result can remain semantically intermediate: some target attributes appear, but source morphology or texture is still partially retained. This failure mode is most visible in fine-grained replacements and is especially relevant to gated variants such as \emph{Navi-FlowEdit + gate}.

\paragraph{Relational inconsistency in complex scenes.}
Navi does not impose explicit scene-graph, geometric, or reflection consistency. In scenes involving mirrors, reflections, repeated objects, or strong inter-object relations, a local edit can be plausible in isolation while remaining globally inconsistent with the rest of the scene. This limitation persists even when the controller successfully avoids global drift.

These failures clarify the scope of our contribution. Navi addresses trajectory control under a fixed budget; it is not a full semantic parser or a world-modeling module. Future work should combine the decoupled controller with stronger support estimation, instance-adaptive windowing, and explicit relational reasoning.

\section{Navi controller inference pipeline}
\label{app:algorithm}

Algorithm~\ref{alg:naviedit_pipeline} outlines the complete inference procedure for prompt-pair differential editors. Unlike standard diffusion sampling, the Navi controller explicitly decouples the editing progress (denoted by $p$) from the model's scale schedule (denoted by $u$). In each iteration, the algorithm enforces the \textit{self-consistency contract} by strictly using the same scale coordinate $u_k$ for constructing anchors, querying the velocity field, and defining the update step size. This ensures that the integration remains stable even when the navigation density is dynamically adjusted within the effective window.

\begin{algorithm}[h]
    \caption{Navi controller inference (strict model-call budget, self-consistent steps)}
\label{alg:naviedit_pipeline}
\begin{algorithmic}
\REQUIRE Source latent $x_{\mathrm{src}}$; conditions $(c_{\mathrm{src}},c_{\mathrm{tar}})$; velocity field $v_\theta(z,t,c)$; edit steps $K$; optional online navigation; optional internal gate (no external masks).
\ENSURE Edited latent $x_K$.

\STATE Retrieve scheduler timesteps $\{t^{(n)}\}_{n=0}^{N-1}$; compute normalized scales $\{u^{(n)}\}\subset[0,1]$ and sort to obtain a monotone path $(t_{\mathrm{path}}[n],u_{\mathrm{path}}[n])$ from noisy$\rightarrow$clean.
\STATE Define linear interpolation $(t,u)\leftarrow \mathrm{Interp}(t_{\mathrm{path}},u_{\mathrm{path}};p)$ for $p\in[0,1]$.
\STATE Initialize $x_0\leftarrow x_{\mathrm{src}}$, $p_0$ (tail start). Set $\Delta V_{-1}\leftarrow \textsc{None}$.

\FOR{$k=0,\ldots,K-1$}
    \STATE $(t_k,u_k)\leftarrow \mathrm{Interp}(\cdot;p_k)$ and sample fresh $\epsilon_k\sim\mathcal N(0,I)$.
    \STATE $z_k^{\mathrm{src}}\leftarrow (1-u_k)x_{\mathrm{src}}+u_k\epsilon_k$,\quad
           $z_k^{\mathrm{tar}}\leftarrow x_k+(z_k^{\mathrm{src}}-x_{\mathrm{src}})$.
    \STATE Query once with a concatenated batch $\{z_k^{\mathrm{src}},z_k^{\mathrm{tar}}\}$ at the same time $t_k$ to get
           $V_k^{\mathrm{src}}\leftarrow v_\theta(z_k^{\mathrm{src}},t_k,c_{\mathrm{src}})$ and
           $V_k^{\mathrm{tar}}\leftarrow v_\theta(z_k^{\mathrm{tar}},t_k,c_{\mathrm{tar}})$.
    \STATE $\Delta V_k\leftarrow V_k^{\mathrm{tar}}-V_k^{\mathrm{src}}$.

    \STATE Optional internal gate (no extra queries): compute $M_k$ from model activations at this step; default $M_k\equiv 1$.
    \STATE $\Delta V_k^{\mathrm{eff}}\leftarrow M_k \odot \Delta V_k$.

        \STATE $\Delta p_{\min}\leftarrow (1-p_k)/(K-k)$.
    \IF{online navigation enabled}
        \STATE Compute risk from already-available signals, e.g.,
        $\omega_k \leftarrow 0$ if $\Delta V_{k-1}=\textsc{None}$ else $1-\cos(\Delta V_k^{\mathrm{eff}},\Delta V_{k-1}^{\mathrm{eff}})$,
        and a magnitude term from $\|\Delta V_k^{\mathrm{eff}}\|$.
        \STATE $\Delta p_k\leftarrow \Delta p_{\min}\cdot \mathrm{Speed}(\mathrm{risk})$.
    \ELSE
        \STATE $\Delta p_k\leftarrow \Delta p_{\min}$.
    \ENDIF

    \IF{$k<K-1$}
        \STATE $p_{k+1}\leftarrow \min\{1-\epsilon_p,\;p_k+\Delta p_k\}$.
    \ELSE
        \STATE $p_{k+1}\leftarrow 1$.
    \ENDIF
    \STATE $p_{k+1}\leftarrow \max\{p_k,\min\{1,p_{k+1}\}\}$.
    \STATE $(t_{k+1},u_{k+1})\leftarrow \mathrm{Interp}(\cdot;p_{k+1})$.
    \STATE $\Delta u_k\leftarrow u_{k+1}-u_k$,\quad $x_{k+1}\leftarrow x_k+\Delta u_k\,\Delta V_k^{\mathrm{eff}}$.
\ENDFOR

\STATE return $x_K$
\end{algorithmic}
\end{algorithm}

\section{Societal Impacts}
\label{sec:societal_impacts}

The development of high-fidelity, training-free editing frameworks like NaviEdit presents significant opportunities while also necessitating a discussion of societal and ethical considerations.

\paragraph{Positive Applications and Broader Impacts.}
Our primary motivation for developing NaviEdit is to enhance the controllability and accessibility of high-end creative tools. The method's core advantages---its training-free nature, compatibility with modern flow-based models~\cite{esser2024scaling}, and ability to operate efficiently under a fixed compute budget---make powerful generative editing accessible to a broader audience without the need for expensive model fine-tuning or heavy inversion branches. We envision our work empowering artists, designers, and content creators by providing a reliable tool for rapid prototyping and concept visualization. For instance, a designer could instantly visualize drastic semantic alterations (e.g., changing a material or object class) while trusting that the global layout and non-edited regions will remain structurally intact. This reduces the technical barrier to complex image manipulation, fostering greater creative expression.

\paragraph{Ethical Considerations and Potential for Misuse.}
Like all high-fidelity generative models, NaviEdit carries the risk of misuse. As demonstrated in our experiments, the ability to execute intense semantic changes while maintaining high structural fidelity could be exploited to generate deceptive content or spread misinformation. The high quality of background preservation achieved by our method might make such forgeries more convincing, as the unaltered context adds to the perceived authenticity of the manipulated content.

Furthermore, NaviEdit is a training-free method that operates directly on pre-trained text-to-image models (e.g., Stable Diffusion 3). It does not, by itself, correct any inherent societal biases (e.g., related to race, gender, or culture) that may be present in these foundational models. As such, edits performed by NaviEdit may reflect or even amplify these underlying biases, depending on the prompts and the specific model employed.

\paragraph{Mitigation and Author Statement.}
We strongly condemn the use of our technology for any deceptive or harmful purpose. Our work is intended for creative and assistive applications, aimed at augmenting human creativity, not replacing it or enabling deception. We believe the best mitigation strategy lies in the concurrent development of robust detection tools for synthetic media, as well as in fostering public awareness and critical media literacy. We encourage the research community to continue to prioritize the development of ethical guidelines and safeguards alongside the advancement of generative capabilities.

\begin{figure}[t]
    \centering
    \includegraphics[width=0.5\linewidth]{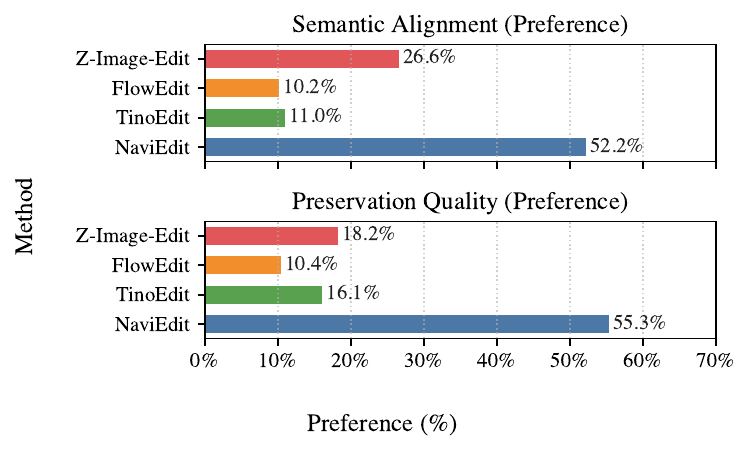}
    \caption{\textbf{User Study Results.} We conducted a blind preference study with 150 participants. \emph{Navi-FlowEdit + gate} is preferred over Z-Image-Edit, FlowEdit, and Tino-Edit in both Semantic Alignment (52.2\%) and Preservation Quality (55.3\%).}
    \label{fig:user_study_results}
\end{figure}

\section{User Study}
\label{sec:user_study}

To complement our quantitative analyses on PIE-Bench, we conducted a formal user study to assess human perceptual preference. 
An example of the evaluation form shown to participants is provided in Figure~\ref{fig:user_study_form}. 
Automated metrics often struggle to capture the holistic ``quality'' or ``naturalness'' of an edit. This study therefore tests whether the deployed system in the main paper, \emph{Navi-FlowEdit + gate}, is preferred by human observers over its main comparison methods.

For the study setup, we recruited 150 participants with diverse backgrounds. We presented them with a four-way blind comparison, showing the original image, a text prompt, and four edited results from \emph{Navi-FlowEdit + gate} (ours), Z-Image-Edit~\cite{cai2025z}, FlowEdit~\cite{kulikov2025flowedit}, and Tino-Edit~\cite{chen2024tino}. The order of all results was randomized to prevent bias.

Participants rated each edited result on a 1–5 rating scale. The two primary criteria used for the reported analysis were Semantic Alignment, which measures how well the edited image matches the target prompt, and Preservation Quality, which measures naturalness, background preservation, and the absence of artifacts in non-edited regions.

For each participant, prompt, and reported criterion, we identified the method or methods receiving the highest rating. If a single method received the highest rating, it received one preference unit. If multiple methods tied for the highest rating, the unit was split equally among the tied methods. This yields 4,500 total preference units (150 participants × 30 prompts) for each reported criterion.

For \textbf{Semantic Alignment}, \emph{Navi-FlowEdit + gate} was the clear winner, preferred in \textbf{52.2\%} of comparisons. This outperformed Z-Image-Edit (26.6\%), while Tino-Edit (11.0\%) and FlowEdit (10.2\%) lagged considerably. This indicates that the decoupled rollout is perceived as executing the prompt more faithfully than the compared methods.

For \textbf{Preservation Quality}, \emph{Navi-FlowEdit + gate} also achieved the top position, securing \textbf{55.3\%} of the vote. This indicates that users perceived it as more stable and artifact-free than Z-Image-Edit (18.2\%) and Tino-Edit (16.1\%). FlowEdit (10.4\%) was frequently penalized for background distortion or drift.

Overall, the user study is consistent with the main quantitative results. \emph{Navi-FlowEdit + gate} is the only method that ranks first in both categories, indicating that the improved trade-off is visible to human observers rather than only to automated metrics.

\section{More Qualitative Results}
\label{sec:more_qualitative}

We provide additional qualitative comparisons in Figure~\ref{fig:supp_sota_1}. These examples further support the quantitative findings presented in the main paper.

In these qualitative examples, \emph{Navi-FlowEdit + gate} produces high-fidelity results that follow the target prompt while maintaining strong background preservation. This contrasts with coupled schedules such as \textit{FlowEdit}, which more often introduce drift or background artifacts when asked to execute large semantic changes. The qualitative results are consistent with the central mechanism of the paper: decoupled navigation keeps the rollout inside an effective scale window where the generative prior is more stable.

\begin{figure}[h]
    \centering
    \fbox{\includegraphics[width=0.8\linewidth]{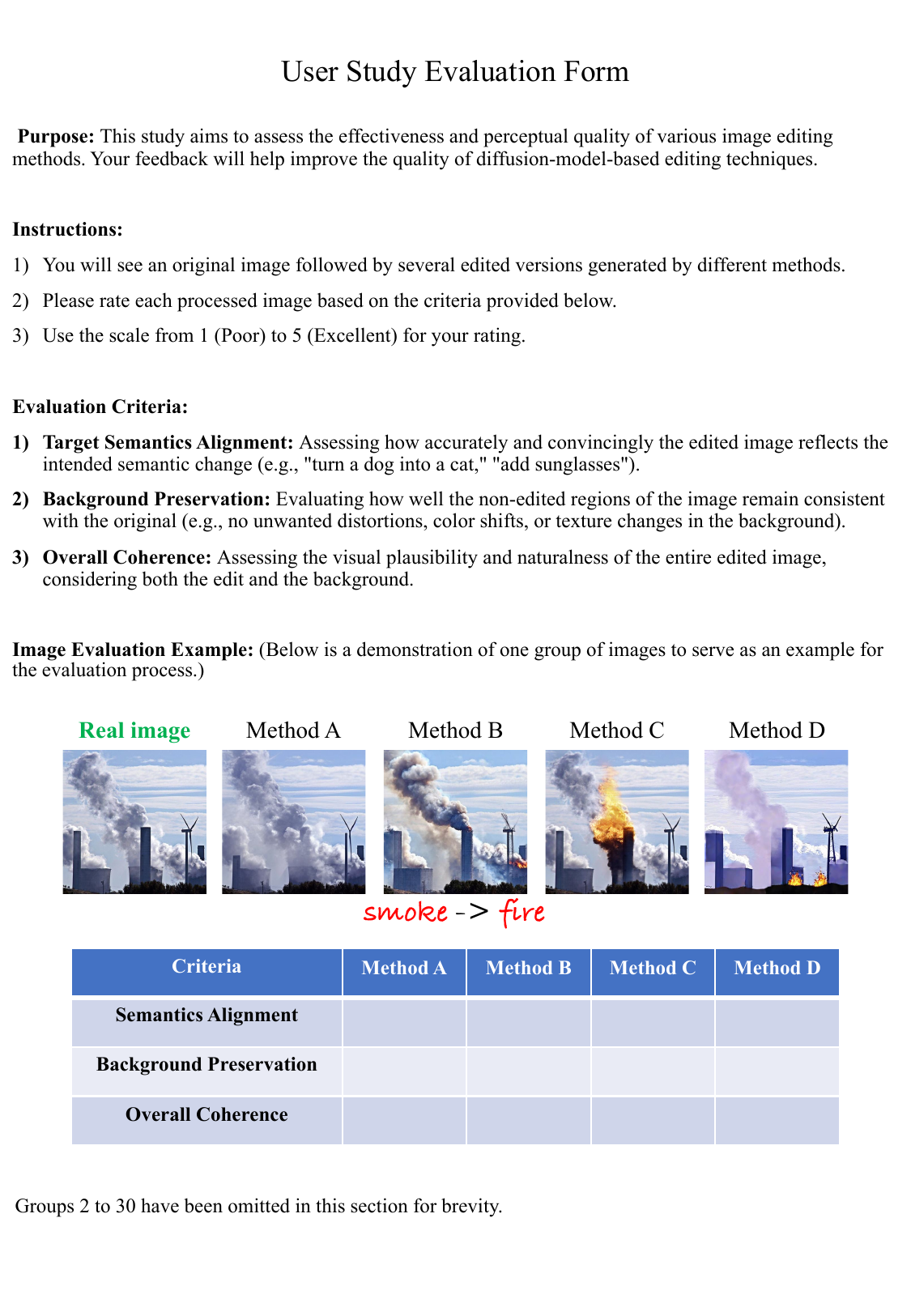}}
    \caption{\textbf{User Study Form.}}
    \label{fig:user_study_form}
\end{figure}

\begin{figure}[h]
    \centering
    \includegraphics[width=0.9\linewidth]{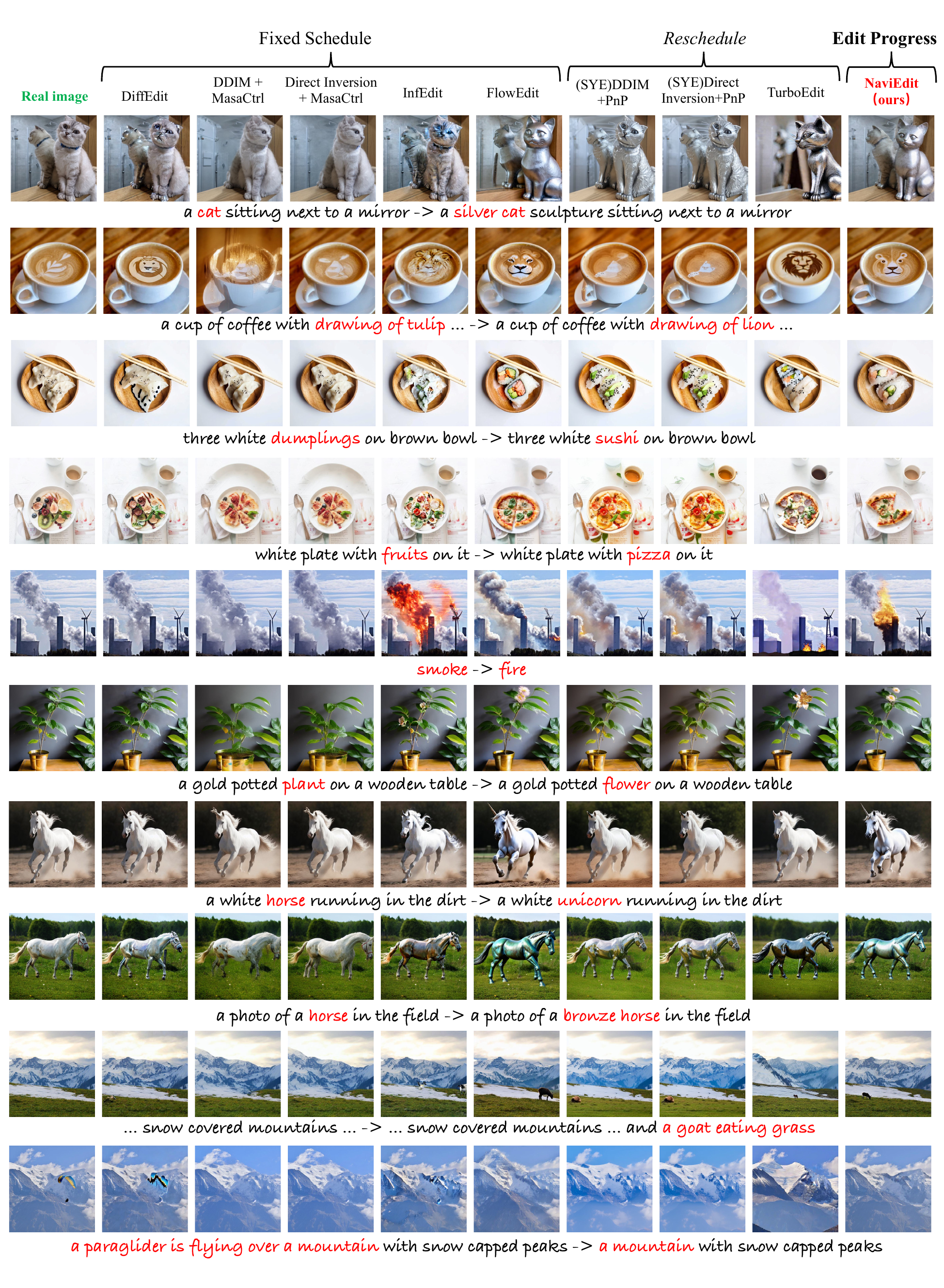}
    \caption{\textbf{Additional Qualitative Comparisons.} These examples illustrate that \emph{Navi-FlowEdit + gate} can execute semantic edits while preserving non-edited regions, whereas the compared baselines may exhibit background leakage or structural degradation.}
    \label{fig:supp_sota_1}
\end{figure}

\end{document}